\documentclass{article}

\usepackage{microtype}
\usepackage{graphicx}
\usepackage{subcaption}
\usepackage{booktabs} 

\usepackage{amsmath}
\usepackage{amssymb}
\usepackage{mathtools}
\usepackage{amsthm}
\usepackage{listings}
\usepackage{float}

\usepackage{hyperref}

\usepackage[accepted]{icml2026}

\usepackage[capitalize,noabbrev]{cleveref}

\theoremstyle{plain}

\theoremstyle{definition}
\newtheorem{definition}{Definition}[section]

\theoremstyle{remark}

\usepackage{amsthm}

\usepackage{centernot}
\usepackage{nicefrac}       
\usepackage{mathtools}
\usepackage{amsbsy}
\usepackage{amstext}
\usepackage{thmtools}
\usepackage{thm-restate}

\begingroup
    \makeatletter
    \@for\theoremstyle:=definition,remark,plain\do{%
        \expandafter\g@addto@macro\csname th@\theoremstyle\endcsname{%
            \addtolength\thm@preskip\parskip
            }%
        }
\endgroup

\newcommand{\grad}{\nabla}

\renewcommand{\mid}{~\vert~}

\usepackage{booktabs,arydshln}
\makeatletter
\def\adl@drawiv#1#2#3{%
        \hskip.5\tabcolsep
        \xleaders#3{#2.5\@tempdimb #1{1}#2.5\@tempdimb}%
                #2\z@ plus1fil minus1fil\relax
        \hskip.5\tabcolsep}
\newcommand{\cdashlinelr}[1]{%
  \noalign{\vskip\aboverulesep
           \global\let\@dashdrawstore\adl@draw
           \global\let\adl@draw\adl@drawiv}
  \cdashline{#1}
  \noalign{\global\let\adl@draw\@dashdrawstore
           \vskip\belowrulesep}}
\makeatother

\renewcommand{\epsilon}{\varepsilon}

\newenvironment{example*}
 {\pushQED{\qed}\example}
 {\popQED\endexample}

\newcommand{\defeq}{\coloneqq}

\DeclareMathOperator*{\argmin}{argmin}

\newcommand{\given}{\mid}

\providecommand\given{} 
\newcommand\SetSymbol[1][]{
  \nonscript\,#1:\nonscript\,\mathopen{}\allowbreak}
\DeclarePairedDelimiterX\Set[1]{\lbrace}{\rbrace}%
{ \renewcommand\given{\SetSymbol[]} #1 }

\usepackage{enumitem}
\usepackage{thm-restate}

\crefformat{equation}{(#2#1#3)}
\crefformat{figure}{Figure~#2#1#3}
\crefname{definition}{Definition}{Definitions}
\crefname{example}{Example}{Examples}
\crefname{lemma}{Lemma}{Lemmas}
\crefname{cor}{Corollary}{Corollaries}
\crefname{theorem}{Theorem}{Theorems}
\crefname{assumption}{Assumption}{Assumptions}

\renewcommand{\P}{\mathbb{P}}

\newcommand{\ConceptDirName}[2]{\texttt{#1}\Rightarrow\texttt{#2}}

\newcommand{\cov}{\mathrm{Cov}}

\newcommand{\kl}[2]{D_{\mathrm{KL}}\left(#1\parallel #2\right)}

\usepackage[textsize=tiny]{todonotes}

\icmltitlerunning{The Information Geometry of Softmax: Probing and Steering}

\begin{document}

\twocolumn[
\icmltitle{The Information Geometry of Softmax: Probing and Steering}



  \icmlsetsymbol{equal}{*}

  \begin{icmlauthorlist}
    \icmlauthor{Kiho Park}{uchicago}
    \icmlauthor{Todd Nief}{uchicago}
    \icmlauthor{Yo Joong Choe}{insead}
    \icmlauthor{Victor Veitch}{uchicago}
  \end{icmlauthorlist}

  \icmlaffiliation{uchicago}{University of Chicago}
  \icmlaffiliation{insead}{INSEAD}


  \icmlkeywords{Information Geometry, Probing, Steering, Softmax, Bregman Divergence}

  \vskip 0.3in
]



\printAffiliationsAndNotice{}  

\begin{abstract}
    This paper concerns the question of how AI systems encode semantic structure into the geometric structure of their representation spaces. The motivating observation is that the natural geometry of these representation spaces should reflect the way models use representations to produce behavior. We focus on the important special case of representations that define softmax distributions. In this case, we argue that the natural geometry is information geometry.
    Our focus is on the role of information geometry on semantic encoding and the linear representation hypothesis. As an illustrative application, we develop \emph{dual steering}, a method for robustly steering representations to exhibit a particular concept using linear probes. We prove that dual steering optimally modifies the target concept while minimizing changes to off-target concepts. Empirically, we find that dual steering enhances the controllability and stability of concept manipulation. Code is available at \href{https://github.com/KihoPark/dual-steering}{github.com/KihoPark/dual-steering}.
\end{abstract}

\section{Introduction}
Understanding and manipulating the internal representations of AI models is central for building trustworthy and controllable AI systems.
Many approaches build on the \emph{linear representation hypothesis}---the idea that high-level concepts (e.g., sentiment, truthfulness, or gender) correspond to specific directions in the vector space containing the model's representations \citep{mikolov2013linguistic, elhage2022toy, park2024linear}.
Researchers have used this idea to identify and manipulate concepts across various architectures \citep{nanda2023emergent, li2023inference,turner2023steering, zou2023representation, gurneelanguage}.
However, the results are somewhat mixed. Although there is clearly structure in the representation spaces, these methods are often brittle, and have usually not been competitive with more direct fine-tuning approaches \citep{hase2023does,makelovsubspace,sharkey2025open,wang2025does}.
This suggests that we do not yet have a full enough understanding of the `linear representation' structure to build robust, generalizable methods.

One gap in our understanding is that linear representation methods are frequently built on the (implicit) assumption that the representation space has a flat (or even Euclidean) geometry, but there is little reason to expect this assumption to hold.
Instead, we would like methods based on the `intrinsic' structure of the representation space.
To that end, we need a notion of geometry that aligns with the way the model actually uses its representations to produce behavior---e.g., a geometry where two representations are `close' if they produce similar outputs.
The purpose of this paper is to operationalize this idea in the particular case of softmax based models, and to explain the practical implications of the resulting geometry for interpretability methods.

Our focus here is on representation vectors $\lambda \in \Lambda \simeq \mathbb{R}^d$ that define probability distributions via the softmax transform.
That is, for any set of candidate items $\mathcal{Y}$, the model assigns $\{\gamma_1, \gamma_2, \dots, \gamma_{|\mathcal{Y}|}\} \subset \Gamma\simeq \mathbb{R}^d$ as vector representations of each item,
and defines the softmax probability distribution:
\begin{align}
    \P(\gamma = \gamma_y \mid \lambda)
    = \exp\!\left( \lambda^\top \gamma_y - A(\lambda) \right),\label{eq:softmax}
\end{align}
where $A(\lambda) := \log \sum_{y'} \exp(\lambda^\top \gamma_{y'})$ is the log-normalizer.
This pattern shows up in many AI architectures, including in the attention mechanism of transformers \citep{vaswani2017attention}, the next-token selection of large language models (LLMs) \citep{brown2020language}, and contrastive models like CLIP \citep{radford2021learning}.
Our starting point is the observation that the notion of closeness of two representation vectors $\lambda,\lambda'$ should reflect the closeness \emph{of the induced probability distributions}.
Information geometry provides a powerful framework for formalizing and studying the innate geometry of parameters of probability distributions \citep{amari2000methods,banerjee2005clustering, amari2016information}.
The main aim of this paper is to understand how the linear representation hypothesis---and the encoding of high-level semantics in representation space---interacts with the natural information geometry of the representation space.

\begin{figure*}[t]
  \centering
  \includegraphics[width=1.0\linewidth]{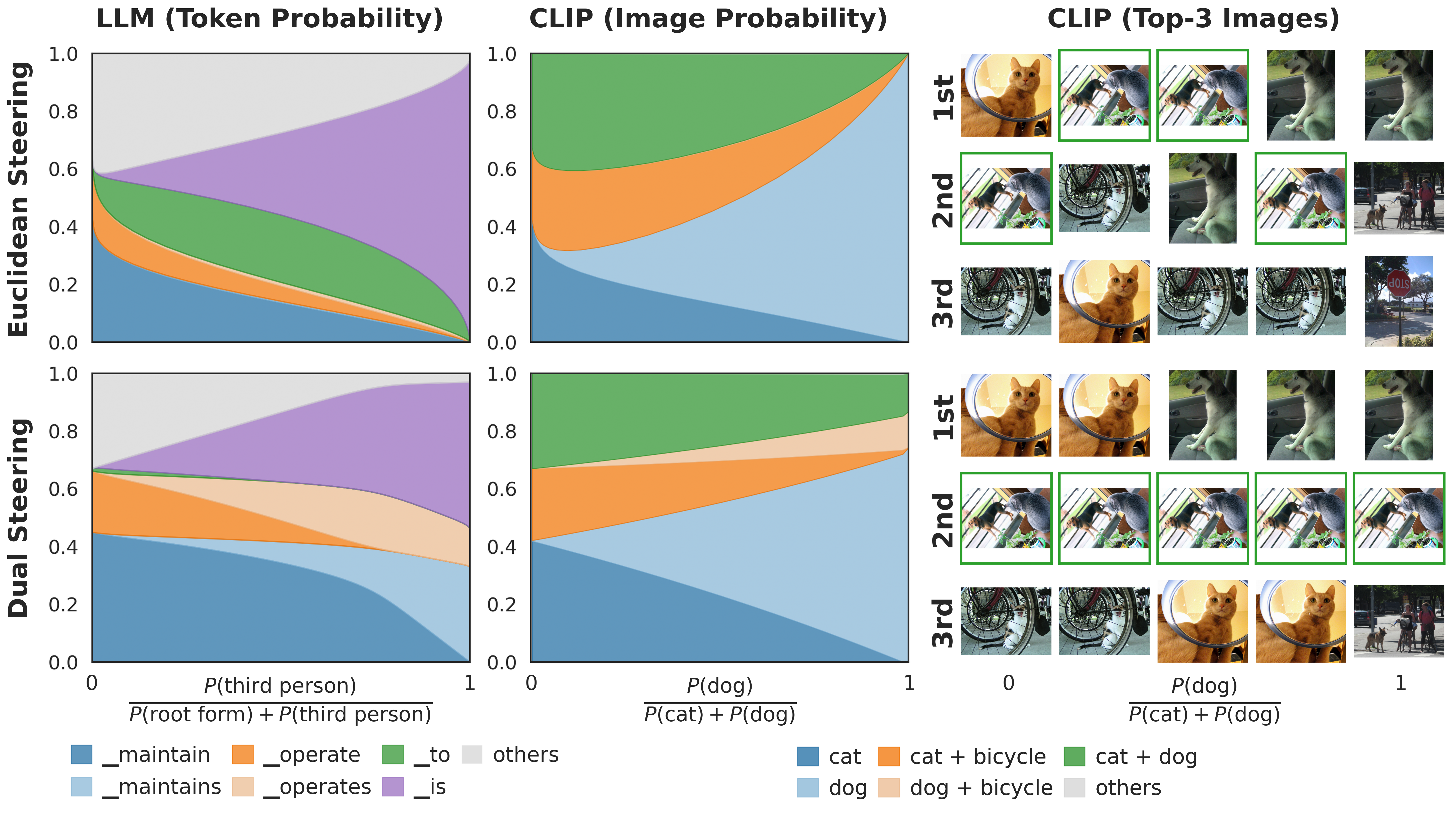}
  \caption{
    \textbf{Dual steering (bottom) effectively modifies the target concept (e.g., $\ConceptDirName{verb}{third-person}$ or $\ConceptDirName{cat}{dog}$) while preserving off-target distributions (e.g., $P(\text{``maintain''}) + P(\text{``maintains''})$ or $P(\text{``cat + bicycle''}) + P(\text{``dog + bicycle''})$),
    whereas Euclidean steering (top) fails to maintain off-target distributions despite reaching the target concept probability.}
    \textbf{Left:} Token probability changes in Gemma-3-4B when steering the context ``Author gives an insight into what it costs US taxpayers to build and'' using a linear probe for $\ConceptDirName{verb}{third-person}$.
    Euclidean steering leaks significant mass to off-target tokens (e.g., ``to'') during intermediate steps, whereas dual steering directly shifts probability from base tokens (e.g., ``maintain'', ``operate'') to target tokens (e.g., ``maintains'', ``operates'').
    \textbf{Center \& Right:} Steering MetaCLIP-2 on the context ``a photo of one cat'' for the concept $\ConceptDirName{cat}{dog}$. Dual steering transfers probability from base images (e.g., ``cat'', ``cat + bicycle'') directly to targets (e.g., ``dog'', ``dog + bicycle''). In contrast, Euclidean steering unintentionally promotes the off-target ``cat + dog'' image (green frame in the right column), which becomes the Top-1 result during intermediate steps.
    In the probability plots, Top-$K$ tokens (LLM) or images (CLIP) are shown explicitly, with the remainder grouped as ``others.''
    }
  \label{fig:figure_1}
\end{figure*}

To that end:
\begin{enumerate}
    \item We identify the natural geometry as a Bregman (dually flat) geometry. This induces a rich duality structure that will play a critical role in understanding the semantic structure of the representation space.
    \item We then study the question of how to interpolate between two representation vectors. In short: there are natural distinct primal and dual interpolations that yield distinct semantics. In particular, this dual interpolation structure shows that a flat geometry cannot suffice to capture the semantic structure of the representation space.
    \item We then show how information geometry interacts with probing and steering representation vectors. This leads us to ``dual steering'', a new method for robustly manipulating representations. We prove that this method modifies the target concept while minimizing unintended changes to off-target concepts.
    \item Finally, we test dual steering using open-source models, including Gemma-3-4B \citep{team2025gemma} and MetaClip-2 \citep{chuangmeta}, showing improved controllability and stability relative to standard Euclidean steering approaches; see \Cref{fig:figure_1}.
\end{enumerate}
The high-level observation here is that the non-Euclidean structure of the intrinsic information geometry of representation vectors is critical for connecting geometry and semantic tasks such as steering.
Although the results here only apply directly to softmax-based models, the high-level idea is widely applicable. Then, in part, we hope that this work can serve as a template for exploiting geometry to improve the robustness of interpretability and control methods more generally.

\section{Duality and Interpolation}
We begin by introducing the information geometric structure and studying the (comparatively) simple problem of interpolation between two representation vectors.

\subsection{Bregman Duality Induced by Softmax}\label{sec:bregman_duality}
Our starting observation is that the Kullback-Leibler (KL) divergence between softmax distributions \Cref{eq:softmax} can be expressed as
\begin{align*}
    \kl{P_{\lambda}}{P_{\lambda'}} = A(\lambda') - A(\lambda) - \grad A(\lambda)^\top (\lambda' - \lambda),
\end{align*}
where $P_{\lambda} \defeq \P(\gamma = \cdot \mid \lambda)$.
This relation can be readily checked by direct computation. The important observation is that the right-hand side is the \emph{Bregman divergence} induced by the convex function $A$. That is, the representation geometry induced by the KL divergence is a Bregman (or dually flat) geometry. 

For our purposes, the key aspect of the Bregman geometry will be its rich duality structure.
For a context embedding $\lambda$, the \emph{dual map} is defined by the gradient of $A$:
\begin{align*}
    \phi(\lambda) := \grad A(\lambda) = \mathbb{E}[\gamma \mid \lambda].
\end{align*}
When $A$ is strictly convex, we also have an \emph{inverse map} as
\begin{align*}
    \lambda(\phi) := \grad A^*(\phi), \quad \text{so that} \quad \grad A^*(\phi(\lambda)) = \lambda,
\end{align*}
where $A^*$ is a convex conjugate of $A$ over the image of $\nabla A$:
\begin{align*}
    A^*(\phi) := \sup_{\lambda} \left\{ \lambda^\top \phi - A(\lambda) \right\}, \quad \phi \in \Phi := \mathrm{Image}(\nabla A).
\end{align*}
Together, these mappings provide a bijection between the primal space $\Lambda$ and the dual space $\Phi$.
A \emph{primal coordinate} $\lambda$ and its \emph{dual coordinate} $\phi(\lambda)$ are different parameterizations of the same probability distribution $P_\lambda$ (or $P_{\phi(\lambda)}$).

\subsection{Interpolation in Primal and Dual Spaces}\label{sec:interpolation}
Our ultimate goal is to connect this geometric framework to the semantic structure of the representation space.
To that end, we begin by studying what it means to interpolate between two points in the representation space.

\begin{figure}[t]
  \centering
  \includegraphics[width=1.0\linewidth]{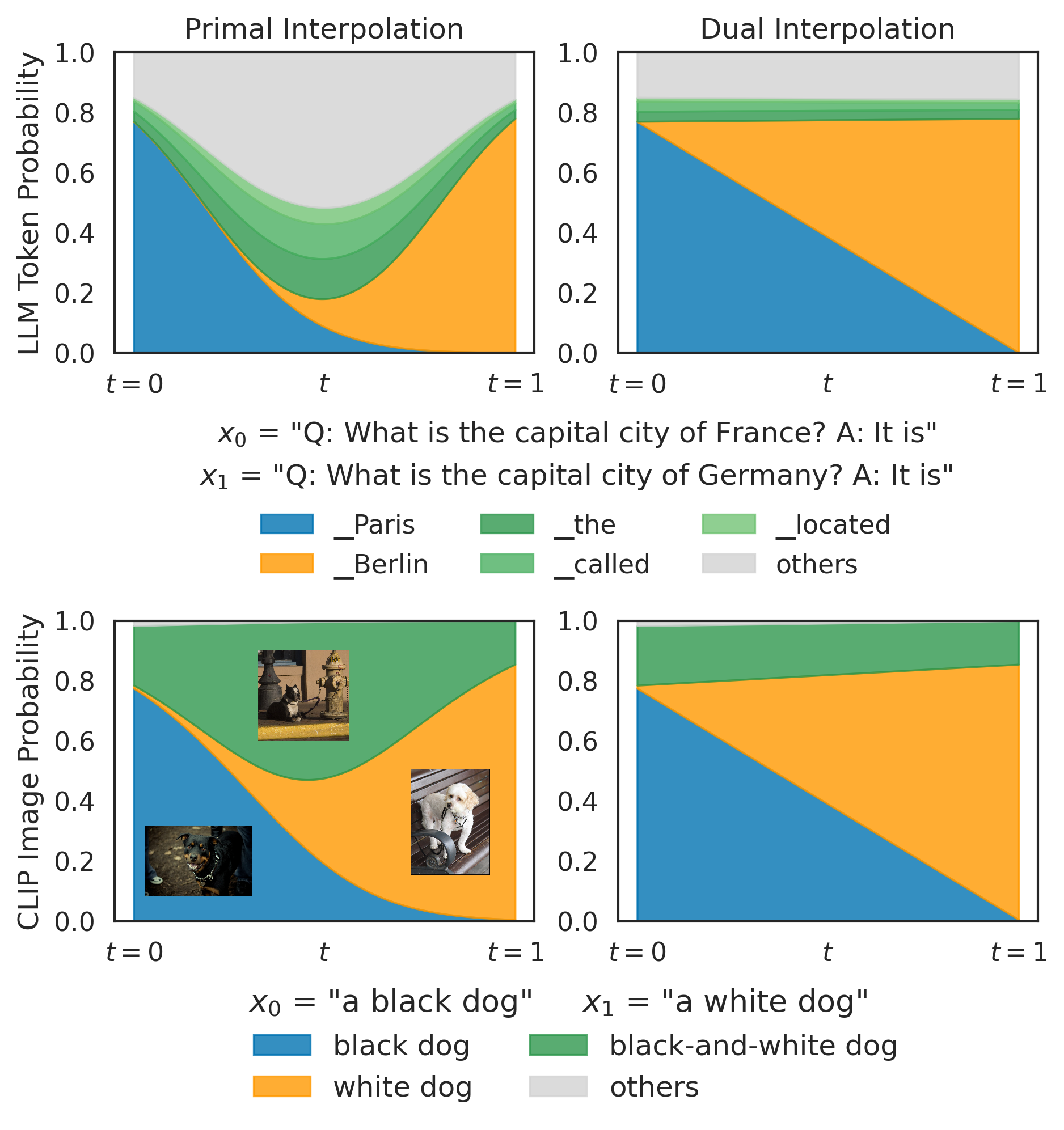}
  \caption{
    \textbf{Primal interpolation emphasizes the shared structure (intersection) of distributions, whereas dual interpolation results in a linear mixture.}
    We visualize output probability changes along interpolation paths between two context embeddings $\lambda(x_0)$ and $\lambda(x_1)$.
    The \textbf{dual interpolation} (right, $m$-geodesic: $\phi_t = (1-t)\phi(\lambda_0) + t \phi(\lambda_1)$) corresponds to a weighted average of the endpoint distributions.
    In contrast, near the midpoint, the \textbf{primal interpolation} (left, $e$-geodesic: $\lambda_t = (1-t)\lambda_0 + t \lambda_1$) upweights shared components (e.g., ``the'', ``called'' in LLM or ``black-and-white dog'' in CLIP), while suppressing endpoint-specific outputs (e.g., ``Paris'' vs. ``Berlin,'' or ``black dog'' vs. ``white dog'').
    Top-$K$ tokens (LLM) or images (CLIP) are shown explicitly, with the remainder grouped as ``others.''
  }
  \label{fig:interpolation}
\end{figure}

In the context of Bregman geometry, there are two natural ways to interpolate between two points: in the primal space and in the dual space.
For two given context embeddings $\lambda_0$ and $\lambda_1$, the straight line between them in the primal space is called an \emph{$e$-geodesic}:
\begin{align*}
    \lambda_t = (1 - t)\lambda_0 + t \lambda_1, \quad t \in [0,1],
\end{align*}
which is a primal interpolation.
On the other hand, the straight line between the dual coordinates $\phi(\lambda_0)$ and $\phi(\lambda_1)$ in the dual space is called an \emph{$m$-geodesic}:\footnote{These ``geodesics'' are not the shortest path with respect to a Riemannian metric, but rather they are defined by specific affine connections \citep{amari2016information}.}
\begin{align*}
    \phi_t = (1 - t)\phi(\lambda_0) + t \phi(\lambda_1), \quad t \in [0,1],
\end{align*}
which is a dual interpolation.
The $e$- and $m$-geodesics represent two distinct interpolation paths on the statistical manifold; the primal interpolation will generally be non-linear in the dual coordinate system, and vice versa.

Their behaviors are closely related to the minimization of KL divergences, as summarized in the following proposition:
\begin{restatable}[Interpolation as KL Minimization]{proposition}{Interpolation}
    \label{prop:interpolation}
    The primal interpolation $\lambda_t$ minimizes a weighted sum of \textbf{reverse} KL divergences:
    \begin{align*}
        \lambda_t \in \argmin_{\lambda \in \Lambda} (1-t)\kl{P_{\lambda}}{P_{\lambda_0}} + t\kl{P_{\lambda}}{P_{\lambda_1}},
    \end{align*}
    whereas the dual interpolation $\phi_t$ minimizes a weighted sum of \textbf{forward} KL divergences:
    \begin{align*}
        \phi_t \in \argmin_{\phi \in \Phi} (1-t)\kl{P_{\lambda_0}}{P_{\phi}} + t\kl{P_{\lambda_1}}{P_{\phi}}.
    \end{align*}
\end{restatable}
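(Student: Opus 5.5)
We will prove both claims directly from the Bregman form of the KL divergence given in \Cref{sec:bregman_duality},
\[
  \kl{P_{\lambda}}{P_{\lambda'}} = A(\lambda') - A(\lambda) - \grad A(\lambda)^\top(\lambda'-\lambda),
\]
together with its dual expression $\kl{P_{\lambda}}{P_{\lambda'}} = A^*(\phi(\lambda)) - A^*(\phi(\lambda')) - \lambda'^\top(\phi(\lambda)-\phi(\lambda'))$. The dual expression follows from the Fenchel identity $A(\lambda)+A^*(\phi(\lambda)) = \lambda^\top\phi(\lambda)$. In each case the strategy is the same: expand the weighted objective and observe that the terms depending on the optimization variable collapse into a single Bregman divergence to the interpolated point. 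Nonnegativity of Bregman divergences then identifies the minimizer.

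\textbf{Primal claim.} For $\lambda\in\Lambda$ write $\phi=\grad A(\lambda)$, and set
\[
  F(\lambda) = (1-t)\kl{P_\lambda}{P_{\lambda_0}} + t\,\kl{P_\lambda}{P_{\lambda_1}}.
\]
Substituting the Bregman form, and using that the weights sum to one, gives
\[
  F(\lambda) = (1-t)A(\lambda_0)+tA(\lambda_1) - A(\lambda) - \phi^\top(\lambda_t-\lambda).
\]
Now add and subtract $A(\lambda_t)$:
\[
  F(\lambda) = \bigl[(1-t)A(\lambda_0)+tA(\lambda_1)-A(\lambda_t)\bigr] + \bigl[A(\lambda_t)-A(\lambda)-\grad A(\lambda)^\top(\lambda_t-\lambda)\bigr].
\]
The first bracket does not depend on $\lambda$. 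The second bracket equals $\kl{P_\lambda}{P_{\lambda_t}}\ge 0$, and it vanishes at $\lambda=\lambda_t$. Hence $\lambda_t$ is a minimizer. (This is the well-known Pythagorean/barycenter identity.)

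\textbf{Dual claim.} Parametrize the second argument by $\phi\in\Phi$, with $\lambda(\phi)=\grad A^*(\phi)$. Using the dual expression, set
\[
  G(\phi) = (1-t)\kl{P_{\lambda_0}}{P_\phi} + t\,\kl{P_{\lambda_1}}{P_\phi}.
\]
Expanding gives
\[
  G(\phi) = (1-t)A^*(\phi(\lambda_0)) + tA^*(\phi(\lambda_1)) - A^*(\phi) - \lambda(\phi)^\top(\phi_t-\phi).
\]
This has exactly the same shape as the primal computation with $A$ replaced by $A^*$. We again add and subtract $A^*(\phi_t)$. The residual is then the Bregman divergence of $A^*$ from $\phi_t$ to $\phi$, which equals $\kl{P_{\phi}}{P_{\phi_t}}\ge 0$ and vanishes at $\phi=\phi_t$.

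A more elementary route avoids $A^*$ altogether. The forward KL $\kl{P}{P_\lambda} = -\sum_y P(y)\lambda^\top\gamma_y + A(\lambda) + \text{const}$. So, in the variable $\lambda$, the objective is $A(\lambda) - \lambda^\top\bar\phi + \text{const}$, where
\[
  \bar\phi = (1-t)\phi(\lambda_0)+t\phi(\lambda_1) = \phi_t.
\]
This is convex in $\lambda$, and its stationarity condition is $\grad A(\lambda)=\phi_t$, i.e.\ moment matching. I would present this version as the main proof, since it is cleaner.

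\textbf{Main obstacle: well-definedness in the dual claim.} One needs $\phi_t\in\Phi$ so that it corresponds to an actual $\lambda$. Equivalently, $\Phi=\mathrm{Image}(\grad A)$ must be convex. The image of $\grad A$ is the relative interior of the convex hull of $\{\gamma_y\}$, and relative interiors of convex sets are convex. So $\phi_t$ is attained whenever the endpoints lie in $\Phi$, and they do by construction.

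If $A$ is not strictly convex (for instance, when the $\gamma_y$ do not span $\mathbb{R}^d$ affinely), the minimizer is only unique up to directions along which $A$ is affine. This is precisely why the statement uses ``$\in\argmin$'' rather than equality. In the primal claim no such issue arises, since $\lambda_t\in\Lambda=\mathbb{R}^d$ trivially.
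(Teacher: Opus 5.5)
Your proposal is correct and follows essentially the paper's proof. The primal part is the same add-and-subtract-$A(\lambda_t)$ argument leaving $\kl{P_{\lambda}}{P_{\lambda_t}}\ge 0$, and the dual route you would present as the main one is exactly the paper's observation that the objective in $\lambda$ is $A(\lambda)-\phi_t^\top\lambda+\text{const}$, convex with stationarity condition $\nabla A(\lambda)=\phi_t$. Your explicit check that $\phi_t\in\Phi$ (via convexity of $\Phi=\mathrm{ri}\,\mathrm{conv}\{\gamma_y\}$) supplies a step the paper leaves implicit; one small slip, in the optional $A^*$ route, is that the residual $A^*(\phi_t)-A^*(\phi)-\lambda(\phi)^\top(\phi_t-\phi)$ equals $\kl{P_{\phi_t}}{P_{\phi}}$ rather than $\kl{P_{\phi}}{P_{\phi_t}}$, which does not affect the conclusion.
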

All proofs are provided in \Cref{sec:proofs}.

The difference between these minimization objectives leads to fundamentally distinct behaviors during interpolation.
Consider approximating a target distribution $P$ with a distribution $Q$ by minimizing either the reverse KL, $\kl{Q}{P}$, or the forward KL, $\kl{P}{Q}$.
If $Q$ assigns high probability to events that are unlikely under $P$, the reverse KL $\kl{Q}{P}$ becomes very large.
Conversely, if $Q$ assigns low probability to events that are likely under $P$, the forward KL $\kl{P}{Q}$ becomes very large.
Consequently, the reverse KL minimizer---and thus, the primal interpolation---tends to capture the intersection of high-probability regions, behaving like an AND operator.
In contrast, the forward KL minimizer---and thus, the dual interpolation---tends to take the union of high-probability regions, behaving like an OR operator.

\paragraph*{Interpolation Results on LLMs and CLIP Models}
\Cref{fig:interpolation} illustrates the distinction between primal and dual interpolation in the context of LLMs and CLIP models.
At the midpoint of the primal interpolation, we observe that the Top-$K$ tokens (or images) with high probability represent an intersection of the possible outputs from both contexts.
Tokens exclusive to only one context have their probabilities significantly suppressed, while those consistent with both contexts are amplified.
For example, when $x_0$ = ``a black dog'' and $x_1$ = ``a white dog'', the predicted probability for an image of a black-and-white dog is substantially higher at the primal midpoint than at either endpoint.
Conversely, in the dual interpolation, the probability mass is more evenly distributed across the union of the possible outputs from both contexts.
This indicates that the dual interpolation preserves the semantic features of both contexts simultaneously rather than filtering for their overlap.
Formally, this dual interpolation corresponds to a linear mixture of the two distributions.

\section{Dual Steering with a Linear Probe}\label{sec:steering}
The interpolation results show that the duality structure plays a crucial role in capturing the semantic structure of the representation space. We now turn to the question of how the information geometry interacts with the linear representation hypothesis and, in particular, steering representations to exhibit particular concepts.

We will focus on contrastive binary concepts such as $\ConceptDirName{male}{female}$ or $\ConceptDirName{dog}{cat}$. Taking $\ConceptDirName{dog}{cat}$ as an example, $W=0$ corresponds to the base concept `dog' and $W=1$ corresponds to the target concept `cat'. We will assume that we have identified a \emph{linear probe} $\beta_W$ that captures the concept. Specifically,
\begin{equation}
        P(W=1 \given \lambda) = \sigma(\beta_W^\top \lambda + b_W) \quad \forall \lambda \in \Lambda,\label{eq:linear_probe}
\end{equation}
where $b_W$ is a concept-specific offset. This relation is basically the defining property of logistic regression, matching a standard approach to designing linear probes \citep{alain2016understanding, li2023inference}.
In general, it is unclear what makes an ideal probe or how best to identify one.
For our analysis, we will simply assume that such a probe has been identified in some manner. The question we address here is: given such a probe, how should we manipulate a given representation to change the desired concept?

The standard approach is to simply add the probe vector directly to the representation:
\begin{equation}
    \lambda_t \defeq \lambda_0 + t \beta_W, \quad t > 0.\label{eq:euclidean_steering}
\end{equation}
We will refer to this as \emph{Euclidean steering}.
It is clear that for sufficiently large $t$, we will have $\beta_W^\top \lambda_t \gg 0$, and thus $P(W=1 \given \lambda_t) \approx 1$.
Accordingly, if the probe is well-aligned with the concept, this method should successfully steer the representation to express the desired concept.
However, this form of steering may also induce undesirable off-target effects, changing other concepts in unintended ways.

Indeed, there is a basic type mismatch in \Cref{eq:euclidean_steering}.
The probe vector $\beta_W$ is an element of the dual space (it's a linear operator on $\Lambda$), but we are naively adding it to an element of the primal space.
This is only valid in the special case where the primal and dual spaces coincide, i.e., when the geometry is Euclidean.
This observation motivates us to introduce \emph{dual steering}, which adds the probe vector in the dual space:
\begin{align*}
    \phi(\lambda_t) \defeq \phi(\lambda_0) + t \beta_W, \quad t > 0.
\end{align*}
As we now show, this dual steering approach is robust in the sense that it minimally perturbs off-target behavior.

\subsection{Robustness of Dual Steering}
The goal of steering is to modify the on-target concept while minimizing changes to off-target concepts. With a probe in hand, we can formalize modifying the on-target concept as moving $\lambda_0$ to $\hat{\lambda}_c$ such that $\beta_W^\top \hat{\lambda}_c = c$ for some target $c$. Then, it is natural to view steering as solving the constrained optimization problem:
\begin{equation*}
    \hat{\lambda}_c = \argmin_{\lambda : \beta_W^\top \lambda = c} D(\lambda, \lambda_0),
\end{equation*}
where $D$ is some notion of distance on the representation space. If $D$ is the Euclidean distance, the solution corresponds to Euclidean (standard) steering \Cref{eq:euclidean_steering}. This formalizes ``off target'' movement as Euclidean distance.
However, there is no reason that this should be the correct notion of distance.
We now show that dual steering arises from minimizing a principled notion of off-target change.

\begin{figure}[t]
    \centering
    \includegraphics[width=1.0\linewidth]{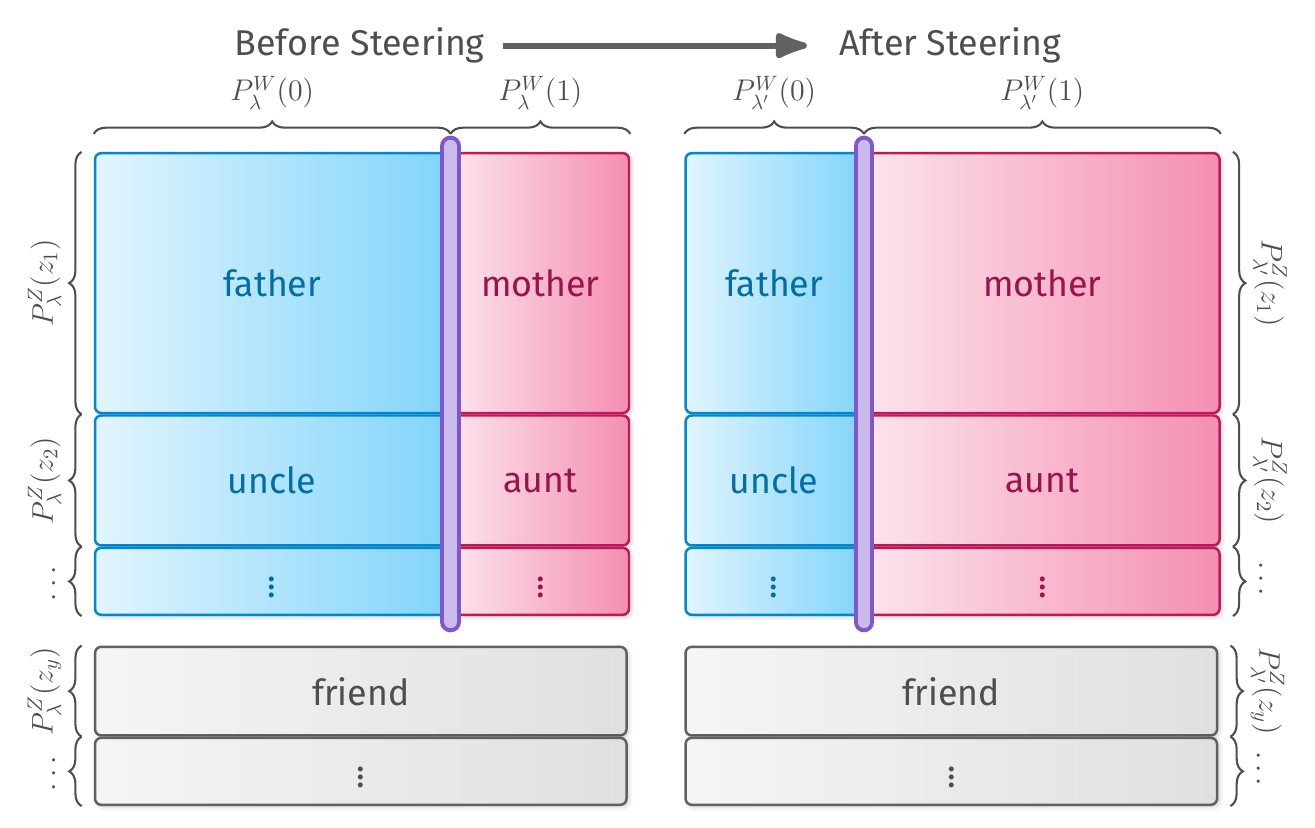}
    \caption{Illustration of an ideal steering process. The blue, red, and gray boxes represent partitions of the total probability mass.
    The goal is to shift only the boundary (purple bar) governing the concept distribution $P^W_\lambda$, while preserving the vertical partitions defined by the off-target distribution $P^Z_\lambda$.}
    \label{fig:prob_diagram}
\end{figure}

We begin by considering what it means to preserve ``off-target'' concepts.
Consider a context ``He is my'' that predicts the next token with the following probabilities in the left plot of \Cref{fig:prob_diagram}.
If we intervene on the concept $\ConceptDirName{male}{female}$ while preserving all off-target semantic concepts, the ideal resulting probabilities should be the right plot of \Cref{fig:prob_diagram}.
In this ideal intervention, the probability mass is redistributed exclusively within relevant counterfactual pairs; e.g., the mass on ``father'' should move to ``mother''.
Crucially, tokens that do not encode the targeted binary concept, such as ``friend'', should remain entirely unaffected.

To formalize this intuition, we partition the output space $\mathcal{Y}$ into a set of $n_W$ counterfactual pairs $\mathcal{Y}_W = \cup_{i=1}^{n_W}\{ y_i^0, y_i^1 \}$ corresponding to a binary concept $W \in \{0, 1\}$, and a set of neutral outputs $(\mathcal{Y}_W)^c$ that do not encode the concept.
We define the \emph{off-target space} $\mathcal{Z}_W = \{ z_1, \dots, z_{n_W} \} \cup \{z_y : y \in (\mathcal{Y}_W)^c\}$, where each $z_i$ represents the shared semantic attributes of the pair $(y_i^0, y_i^1)$.
This allows us to capture the idea of distribution that mixes over concept-related and concept-irrelevant components:
\begin{definition}[Concept-Factorizable Distribution]\label{def:concept_factorizability}
    A probability distribution $P_\lambda$ over $\mathcal{Y}$ is \emph{concept-factorizable} with respect to $W$ if there exists a concept distribution $P^W_\lambda$ over $\{0, 1\}$ and an off-target distribution $P^Z_\lambda$ over $\mathcal{Z}_W$ such that
    \begin{align*}
        P_\lambda(y) = 
        \begin{cases} 
        P^W_\lambda(w) P^Z_\lambda(z_i) & \text{if } y = y_i^w \in \mathcal{Y}_W,\\
        P^Z_\lambda(z_y) & \text{if } y \in (\mathcal{Y}_W)^c,
        \end{cases}
    \end{align*}
    where $P^Z_\lambda$ is a valid probability distribution satisfying $\sum_{i=1}^{n_W} P^Z_\lambda(z_i) + \sum_{y \in (\mathcal{Y}_W)^c} P^Z_\lambda(z_y) = 1$.
    Under this factorization, $P(W=w \given \lambda) = P^W_\lambda(w)$ for $w = 0,1$.
\end{definition}
In our running example, the on-target concept $W$ corresponds to the binary concept $\ConceptDirName{male}{female}$.
The variable $z_1$ represents the shared semantic attribute ``parent'' (spanning the counterfactual pair ``father'' and ``mother''), while $z_{\text{friend}}$ represents the neutral token ``friend.''
In \Cref{fig:prob_diagram}, the horizontal split (purple bar) defines the concept distribution, while the block heights represent the off-target distribution.

With this definition in hand, we can now prove that dual steering modifies the target concept while minimizing impact on the off-target distribution:
\begin{restatable}[Dual Steering with a Linear Probe]{theorem}{Steering}\label{thm:steering}
    Suppose there exists a linear probe $\beta_W$ for a binary concept $W$ satisfying \Cref{eq:linear_probe}.
    Given a context embedding $\lambda_0$ and a hyperplane $\Lambda_W(c) :=\{\lambda : \beta_W^\top \lambda = c\}$, if a minimizer $\hat\lambda \in \argmin_{\lambda \in \Lambda_W(c)} \kl{P_{\lambda_0}}{P_{\lambda}}$ exists, we have
    \begin{align}
        \phi(\hat\lambda) = \phi(\lambda_0) + t \beta_W \quad \text{for some } t \in \mathbb{R}.\label{eq:dual_steering}
    \end{align}
    Furthermore, if $P_\lambda$ is concept-factorizable with respect to $W$ for all $\lambda \in \Lambda_W(c) \cup \{\lambda_0\}$, then $\hat{\lambda}$ satisfies
    \begin{align}
        \hat\lambda \in \argmin_{\lambda \in \Lambda_W(c)} \kl{P^Z_{\lambda_0}}{P^Z_{\lambda}}.\label{eq:off_target_preservation}
    \end{align}
    Thus, dual steering identifies a minimizer $\hat\lambda$ on the hyperplane that best preserves the off-target distribution $P^Z$ while modifying the concept distribution $P^W$.
\end{restatable}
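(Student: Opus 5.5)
The plan has two parts. First I derive the dual-steering characterization \Cref{eq:dual_steering} from the first-order optimality condition. Then I use concept factorizability to decompose the KL divergence into a concept part and an off-target part, and show that the concept part is constant on the hyperplane.

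\textbf{Step 1: first-order condition.} By the Bregman identity of \Cref{sec:bregman_duality},
\[
\kl{P_{\lambda_0}}{P_{\lambda}} = A(\lambda) - A(\lambda_0) - \phi(\lambda_0)^\top(\lambda - \lambda_0).
\]
Its gradient in $\lambda$ is therefore $\phi(\lambda) - \phi(\lambda_0)$. The constraint $\beta_W^\top \lambda = c$ is affine, so at any minimizer $\hat\lambda$ the Lagrange condition holds without further qualification. It reads $\phi(\hat\lambda) - \phi(\lambda_0) = t\beta_W$ for some multiplier $t \in \mathbb{R}$, which is exactly \Cref{eq:dual_steering}. Convexity of $A$ means the objective is convex on the hyperplane, so this stationarity condition is also sufficient for optimality. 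That sufficiency is useful in Step 2.

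\textbf{Step 2: decomposition under factorizability.} Assume $P_\lambda$ is concept-factorizable for every $\lambda \in \Lambda_W(c) \cup \{\lambda_0\}$. For such $\lambda$ I expand the KL divergence as a sum over $y$. Pair terms give $\log P_\lambda(y_i^w) = \log P^W_\lambda(w) + \log P^Z_\lambda(z_i)$, and neutral terms give $\log P^Z_\lambda(z_y)$.

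\begin{itemize}
\item Summing the pair terms over $w$ factors out $\sum_w P^W_{\lambda_0}(w) = 1$.
\item Summing over $i$ factors out the total pair mass $m_0 := \sum_i P^Z_{\lambda_0}(z_i)$.
\end{itemize}

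This should give the chain rule
\[
\kl{P_{\lambda_0}}{P_\lambda} = m_0\,\kl{P^W_{\lambda_0}}{P^W_\lambda} + \kl{P^Z_{\lambda_0}}{P^Z_\lambda}.
\]

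\textbf{Step 3: the concept term is constant.} On the hyperplane, \Cref{eq:linear_probe} gives $P^W_\lambda(1) = \sigma(c + b_W)$ for every $\lambda \in \Lambda_W(c)$, so $P^W_\lambda$ is the same distribution throughout. The factor $m_0$ depends only on $\lambda_0$. Hence the first term of the decomposition is constant over $\Lambda_W(c)$. It follows that the full KL and the off-target KL differ by a constant on the hyperplane and have the same minimizers there. Since $\hat\lambda$ minimizes the full KL, this gives \Cref{eq:off_target_preservation}.

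\textbf{Main obstacle.} The delicate point is the chain-rule bookkeeping in Step 2. The concept KL carries the weight $m_0$ rather than $1$, because neutral outputs carry no concept information. The argument must check that this weight depends only on $\lambda_0$, which it does. A second subtlety is that factorizability is assumed only on $\Lambda_W(c) \cup \{\lambda_0\}$, not globally. This is harmless, because the argmin in \Cref{eq:off_target_preservation} is taken over the hyperplane alone, and the decomposition is only needed there.
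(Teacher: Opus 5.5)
Your proposal is correct and follows the paper's proof essentially step for step. Your Lagrange condition for the affine constraint is the same first-order argument the paper carries out by parametrizing $\Lambda_W(c)$ with a basis of the null space of $\beta_W^\top$, and your decomposition $\kl{P_{\lambda_0}}{P_\lambda} = m_0\,\kl{P^W_{\lambda_0}}{P^W_\lambda} + \kl{P^Z_{\lambda_0}}{P^Z_\lambda}$, with the first term constant on the hyperplane, is exactly the paper's argument (the sufficiency remark in your Step 1 is never needed, since Step 3 only uses that the two objectives differ by a constant on $\Lambda_W(c)$).
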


Intuitively, as shown in \Cref{fig:geodesic_diagram}, dual steering remains constrained to the region preserving the off-target distribution. This behavior corresponds to the ideal intervention depicted in \Cref{fig:prob_diagram}: dual steering shifts the purple bar horizontally to alter the on-target concept, while maintaining the height of each block to preserve the off-target distributions.

\begin{figure}[t]
    \centering
    \includegraphics[width=1.0\linewidth]{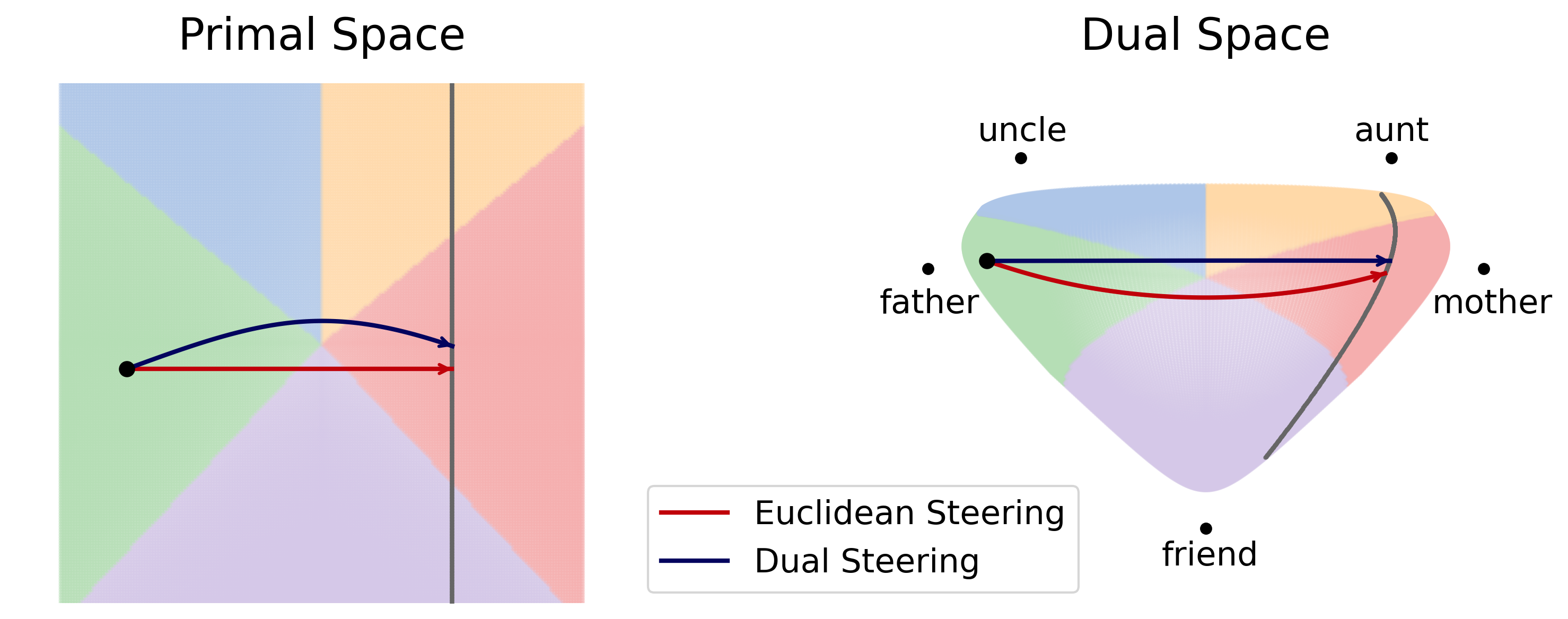}
    \caption{Illustration of Euclidean versus dual steering in the primal and dual spaces. Euclidean steering (red) drifts into a region where the unrelated token ``friend'' gains more probability mass, whereas dual steering (blue) remains within the region where the off-target distribution is preserved.
    The background color indicates the top-1 token at each point, e.g., green for ``father'' and purple for ``friend''.}
    \label{fig:geodesic_diagram}
\end{figure}

\subsection{Asymmetry of Steering}\label{sec:asymmetry}
One might wonder if a symmetric property holds for Euclidean steering. Suppose instead we had a linear probe $\delta_W$ onto the dual space (i.e., $P(W= 1 \mid \lambda) = \sigma(\delta_W^\top \phi(\lambda) + \tilde{b}_W)$). In this case, $\delta_W$ is an element of the primal space. Such a representation might be constructed by, e.g., encoding pairs of inputs that vary by the target concept ($\lambda(\text{``he is the''})$ vs $\lambda(\text{``she is the''})$) and taking $\delta_W$ to be the mean difference vector. Then it would be natural to identify the goal of steering as finding the reverse KL projection\footnote{Recall from \Cref{prop:interpolation} that an $e$-geodesic corresponds to a minimizer of the reverse KL divergence.} onto the hyperplane $\Phi_W(c) := \{\phi \in \Phi: \delta_W^\top \phi = c\}$ in the dual space.
In this case, similarly to \Cref{eq:kl_decomposition} in the proof, we can use the concept-factorization assumption to decompose the reverse KL divergence as:
\begin{align}
    \sum_{i=1}^{n_W} P_{\lambda}^Z(z_i) \cdot \kl{P^W_{\lambda}}{P^W_{\lambda_0}} + \kl{P^Z_{\lambda}}{P^Z_{\lambda_0}}\label{eq:asymmetry_kl_decomposition} 
\end{align}
The first term captures the on-target divergence and the second term captures the off-target divergence.
Crucially, even if $\kl{P^W_{\lambda}}{P^W_{\lambda_0}}$ is constant on the hyperplane, the total probability mass of the counterfactual pairs $\sum_{i=1}^{n_W} P_{\lambda}^Z(z_i)$ depends on $\lambda$ and cannot be ignored.
Consequently, the projection does not generally preserve the off-target distribution $P^Z$.
Instead, it tends to minimize the total probability mass of tokens in $\mathcal{Y}_W$ relative to the others.
This leads to the ``leakage'' observed in practice, where Euclidean steering inadvertently shifts mass to unrelated tokens, thereby failing to maintain semantic invariance, as illustrated in \Cref{fig:geodesic_diagram}.

Note that, in practice, the concept-factorizability assumption in \Cref{def:concept_factorizability} may not always hold. This assumption is used to prove \Cref{thm:steering}, however, the difference between Euclidean and dual steering does not \textit{require} factorization. This difference in steering behavior is due to the way that probability transfers between counterfactual pairs and other unrelated tokens with the different steering methods.

\section{Practical Implementation of Dual Steering}\label{sec:implementation}
We have established theoretically that dual steering effectively modifies the target concept while preserving off-target distributions.
We now turn to how to implement dual steering in practice.

\subsection{Feasibility Constraints and Rank-Deficiency}
A key challenge here is that while the primal space $\Lambda$ is (largely) unconstrained, the dual space $\Phi$ is only a bounded convex set. For example, in the case of a fixed, finite set of items, each dual vector must be in the convex hull of those items.\footnote{This is easy to see if we recall that the dual coordinate $\phi(\lambda) = \mathbb{E}[\gamma \mid \lambda]$ is a convex combination of the unembedding vectors $\gamma_y$ weighted by the softmax probabilities $P_\lambda$.} This means that when we update $\phi' = \phi(\lambda_0) + t\beta_W$, we need to ensure that $\phi'$ remains within the interior of the convex hull of the unembedding vectors. Otherwise, there is no representation vector $\lambda_t$ so that $\phi(\lambda_t) = \phi'$.

To circumvent this issue, we trace the (non-linear) path in the primal space that corresponds to linear steering in the dual space.
Using a first-order Taylor expansion, we can approximate the change in dual coordinates via the Hessian of the log-normalizer:
\begin{align*}
    \grad A(\lambda') - \grad A(\lambda) \approx \grad^2 A(\lambda) (\lambda' - \lambda).
\end{align*}
The incremental update in the primal space is then given by taking a step $\Delta \lambda$ such that
\begin{align*}
    \nabla^2 A(\lambda) \Delta \lambda = \epsilon\beta_W 
\end{align*}
for infinitesimal $\epsilon>0$. 
Now, in the softmax case, the Hessian $\nabla^2 A(\lambda)$ corresponds to the covariance matrix of the unembedding vectors under the softmax distribution $P_\lambda$:
\begin{align*}
    \nabla^2 A(\lambda) = \cov[\gamma \mid \lambda].
\end{align*}
All together, this gives us a way to compute a primal update $\lambda' = \lambda + \Delta \lambda$ that corresponds to a small step in the dual space along the concept direction $\beta_W$ by solving the linear system:
\begin{equation}
    \cov[\gamma \mid \lambda] \Delta \lambda = \epsilon \beta_W.\label{eq:newton_step}
\end{equation}
Then, we can choose a small step size and iteratively apply this update to trace out the dual steering path. 

However, this approach breaks when the Hessian is (numerically) rank-deficient. If the concept direction $\beta_W$ lies outside the column space of the Hessian, the linear system in \Cref{eq:newton_step} has no solution. Geometrically, this signifies that the dual coordinate has encountered a ``wall'' at the boundary of the convex hull $\Phi$.
This happens frequently in practice.
The underlying reason is that when the softmax is highly concentrated on a few tokens, the covariance matrix (the Hessian) is low-rank, as it only captures variation among those few tokens.
For example, if $\beta_W$ represents a binary concept $\ConceptDirName{male}{female}$ but the starting representation $\lambda_0$ only assigns probability to tokens like ``father'' and ``uncle'' then the column space of the Hessian will not contain the direction from, e.g., ``father'' to ``mother''. In this case, the direct steering update for the male to female direction is ill-defined.

\subsection{Robust Steering via Regularized Newton Updates}
To overcome this singularity, we employ a regularized Newton method as detailed in \Cref{alg:dual}. We introduce a regularization parameter $\alpha > 0$ to the covariance matrix:
\begin{align*}
    (\cov[\gamma \mid \lambda] + \alpha I_d) v = \epsilon\beta_W.
\end{align*}
This ensures the matrix is full-rank and invertible. The resulting movement in the dual space follows:
\begin{align*}
    \phi(\lambda') - \phi(\lambda) \approx \epsilon\cov[\gamma \mid \lambda](\cov[\gamma \mid \lambda] + \alpha I_d)^{-1} \beta_W.
\end{align*}

The behavior of this update is twofold. When $\beta_W$ lies within the range of the Hessian, the regularization effect is negligible for small $\alpha$, allowing for dual steering along the concept direction.
Conversely, when $\beta_W$ resides in the null space---where $P_{\lambda}$ is so concentrated that the dual coordinate hits the boundary of $\Phi$---the regularized step $v$ nudges the distribution toward regions of higher entropy. This increases the local variance of the concept $W$, eventually bringing $\beta_W$ back into the range of the Hessian.\footnote{See \Cref{sec:cosine_similarity} for details.}

In practice, to solve the linear program at each step efficiently, we use the conjugate gradient method, as provided in \Cref{alg:cg_dual}. The new computational complexity is $\mathcal{O}(nkd)$, where $k$ is the vocabulary size, $d$ is the dimensionality of the embedding space, and $n$ is the number of conjugate gradient iterations (e.g., $n = 20$). This is faster than the naive $\mathcal{O}(k d^2 + d^3)$ algorithm, which directly computes the covariance matrix and its inverse, while yielding the same results.

\begin{algorithm}[t]
\caption{Dual Steering via Regularized Newton}\label{alg:dual}
\begin{algorithmic}

\STATE {\bfseries Input:} initial primal point $\lambda_0 \in \Lambda$, concept direction $\beta_W$

\STATE {\bfseries Parameters:} regularization $\alpha$, steps $T$,
step size $\eta$

\FOR{$t = 0$ {\bfseries to} $T-1$}
    \STATE Compute covariance:
    $\Sigma_t \leftarrow \cov[\gamma \given \lambda_t]$
    \STATE Regularize: $\Sigma^{\text{reg}}_t \leftarrow \Sigma_t + \alpha I_d$
    \STATE Solve linear system: $\Sigma^{\text{reg}}_t v = \beta_W$
    \STATE Normalize and step: $\lambda_{t+1} \leftarrow \lambda_t + \eta \dfrac{v}{\|v\|_2}$ 
\ENDFOR

\STATE {\bfseries Output:} path $\{\lambda_t\}_{t=0}^{T}$

\end{algorithmic}
\end{algorithm}

\section{Experiments}\label{sec:experiments}
We now turn to the empirical evaluation of the dual steering method.
To analyze LLM behavior, we utilize Gemma-3-4B \citep{team2025gemma} with embeddings of contexts sampled from AllenAI C4~\citep{raffel2020exploring}. We implement steering for several binary concepts, such as $\ConceptDirName{verb}{third-person}$, $\ConceptDirName{verb}{ing}$, $\ConceptDirName{verb}{past}$, and $\ConceptDirName{English}{French}$.
For vision-language behavior, we employ MetaClip-2 \citep{chuangmeta}.
Our evaluation utilizes images from synthetic object datasets (e.g., ``blue circles + green squares'') and the COCO dataset~\citep{lin2014microsoft}, where the image embeddings serve as $\gamma_y$ for the softmax distribution in \Cref{eq:softmax}. Specifically, we implement steering for concepts such as $\ConceptDirName{blue}{red}$ for the synthetic dataset and $\ConceptDirName{dog}{cat}$ for the COCO dataset.

It is worth noting that off-target concepts in image models behave slightly differently than in LLMs. For instance, when steering the concept $\ConceptDirName{dog}{cat}$, an image containing a dog and a bicycle, and one containing a cat and a bicycle, constitute a counterfactual pair. However, an image containing both a dog and a cat does not have a valid counterfactual pairing since it contains both attributes; consequently, it should be treated as a neutral example whose probability should remain unaffected by dual steering.

We construct our test probes using sets of representation vectors $\{\lambda^0_i\}_{i=1}^n$ and $\{\lambda^1_i\}_{i=1}^n$ where each element expresses the base or target attribute (e.g., ``He is the'' in one set, ``She is the'' in the other).
We define two directions, the primal mean difference (Primal MD)
\begin{align*}
    \beta^{\Lambda}_W = \frac{1}{n}\sum_{i} \lambda^1_i - \frac{1}{n}\sum_{i} \lambda^0_i,
\end{align*}
and the dual mean difference (Dual MD)
\begin{align*}
    \beta^{\Phi}_W = \frac{1}{n}\sum_{i} \phi(\lambda^1_i) - \frac{1}{n}\sum_{i} \phi(\lambda^0_i).
\end{align*}
As shown in \Cref{fig:probing_performance}, both directions effectively serve as probes.
Then, using a test set of representation vectors that express the base attribute, we perform both Euclidean and dual steering along each direction. See \Cref{sec:experiment_details} for further details.

\begin{figure}[t]
  \centering
  \includegraphics[width=1.0\linewidth]{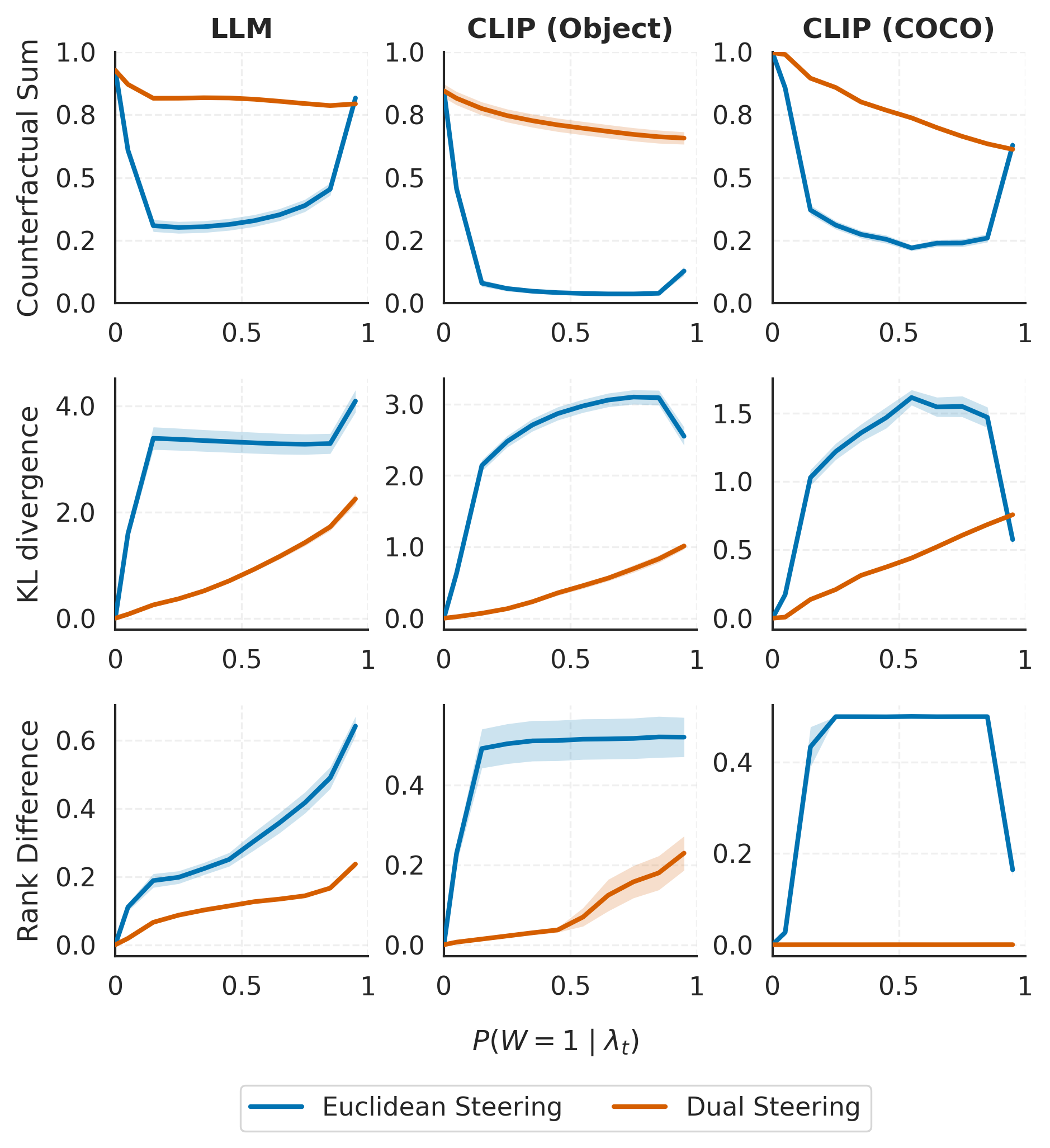}
  \caption{
  \textbf{Dual steering (red) consistently preserves off-target distributions better than Euclidean steering (blue), while both boost the target concept probability.}
  We plot three robustness metrics (y-axes) against the target concept probability (x-axis) achieved via steering along the dual mean difference.
  As the target concept probability approaches 1 (moving right), Euclidean steering degrades the off-target distribution, whereas dual steering maintains it.
  \textbf{Columns:} Tasks include LLM steering for $\ConceptDirName{English}{French}$ (left), CLIP on synthetic objects for $\ConceptDirName{yellow}{green}$ (middle), and CLIP on real images (COCO) for $\ConceptDirName{carrot}{broccoli}$ (right).
  \textbf{Rows:} The top row shows the total probability mass on counterfactual pairs (constant is better). The middle and bottom rows show the KL divergence and rank difference of off-target distributions (lower is better).
  Lines represent the mean, and shading indicates the standard error of the mean (SEM) across test contexts.
  }
  \label{fig:metrics}
\end{figure}

\subsection{Metrics}
We need to measure both the success of steering the target concept and the preservation of off-target concepts.
The target concept probability $P_{\lambda}^W(1)$ can be measured as
\begin{align*}
    P(W=1 \mid \lambda_t) = \frac{\sum_{i=1}^{n_W} P(y^1_i \mid \lambda_t)}{\sum_{i=1}^{n_W} P(y^0_i \mid \lambda_t)+ \sum_{i=1}^{n_W} P(y^1_i \mid \lambda_t)},
\end{align*}
where $\lambda_t$ is the steered representation vector at step $t$.
We expect all steering methods to increase this probability to $1$.

Measuring off-target drift is more subtle.
We consider three metrics based on the off-target distribution $P^Z_{\lambda}$ defined in \Cref{def:concept_factorizability}.
First, we measure the total probability mass assigned to the counterfactual pairs during steering:
\begin{align*}
    \sum_{i=1}^{n_W}P_{\lambda_t}^Z (z_i) = \sum_{i=1}^{n_W} P(y^0_i \mid \lambda_t)+ \sum_{i=1}^{n_W} P(y^1_i \mid \lambda_t).
\end{align*}
We evaluate whether Euclidean steering assigns lower probability mass to these counterfactual pairs than dual steering, as conjectured in \Cref{sec:asymmetry}.
Second, we calculate the KL divergence of the off-target distributions between the initial and steered parameters:
\begin{align*}
    \kl{P^Z_{\lambda_0}}{P^Z_{\lambda_t}} = \sum_{z\in \mathcal{Z}_W} P^Z_{\lambda_0}(z) \log \dfrac{P^Z_{\lambda_0}(z)}{P^Z_{\lambda_t}(z)}.
\end{align*}
A small value indicates that the off-target distribution is well-preserved. However, even with a low KL divergence, the relative ranking of off-target components $z\in \mathcal{Z}_W$ may shift significantly. Therefore, we also measure the weighted sum of the inverse rank differences:
\begin{align*}
    \sum_{z\in \mathcal{Z}_W} P^Z_{\lambda_0}(z) \cdot \left| \dfrac{1}{\text{ranking}_{P^Z_{\lambda_t}}(z)} - \dfrac{1}{\text{ranking}_{P^Z_{\lambda_0}}(z)}\right|.
\end{align*}
A low value indicates that the ranking of off-target components remains stable during steering.

\subsection{Results}
Example steering results are shown in \Cref{fig:figure_1}.
As expected, \Cref{fig:metrics} demonstrates that both Euclidean and dual steering successfully promote the target concept.
However, Euclidean steering tends to distort the off-target distribution, often assigning significant probability mass to unrelated tokens during intermediate steps. As a result, dual steering outperforms Euclidean steering across all three robustness metrics.

Additional steering results involving a broader range of concepts and directions are provided in \Cref{sec:steering_all}.
These results show that dual steering consistently maintains superior performance over Euclidean steering in preserving off-target concepts. This holds regardless of whether Primal MD or Dual MD is employed as the linear probe.

\subsection{Discussion}\label{sec:experiment_discussion}
\paragraph{When does Euclidean steering work well?}
Euclidean steering occasionally succeeds in preserving off-target distributions.
This typically occurs when the ``counterfactual sum'' $\sum_{i=1}^{n_W} P_{\lambda_t}^Z(z_i)$ remains relatively constant while a linear probe is added in the primal space.
This stability makes the entire first term in \Cref{eq:asymmetry_kl_decomposition} constant; in this regime, the second term---the KL divergence of the off-target distribution---is effectively minimized by Euclidean steering.

For example, consider a ``base'' distribution that concentrates almost all of its probability mass on tokens in counterfactual pairs. If we employ a suitable concept direction,\footnote{e.g., a concept direction $\beta_W$ satisfying $\beta_W^\top \gamma(y) > \beta_W^\top \gamma(y')$ for any $y = y_i^w\in \mathcal{Y}_W$ and $y'\in (\mathcal{Y}_W)^c$.} Euclidean steering will shift probability mass among the counterfactual tokens rather than leaking it toward neutral ones.

This condition is often met when using the Primal MD as the concept direction. \Cref{fig:all_experiments_sum,fig:all_experiments_fkl} show that Euclidean steering with the Primal MD preserves counterfactual sums and thus inherently minimizes off-target divergence across multiple concepts.
However, even when KL divergence is minimized, the relative ranking of off-target components can shift significantly, as illustrated in \Cref{fig:all_experiments_rank_diff}.
Consequently, dual steering remains the more robust choice for preserving the off-target distribution.

\paragraph{Probing Assumption}
Steering with a linear probe is significantly impacted by the quality of the probe.
In \Cref{thm:steering}, we assume the concept probability $P(W=1 \mid \lambda)$ is constant across the entire hyperplane $\Lambda_W(c)$. This formalizes the idea that the probe sufficiently represents the target concept without any off-target entanglement.
This is a stringent requirement.
In practice, probes are trained on finite datasets, and we have no guarantee that this probability invariance assumption holds when moving away from the training distribution.
We provide an experimental illustration and further explanation of this phenomenon in \Cref{sec:probing}.
However, even when this assumption is violated, empirical results show that dual steering still outperforms Euclidean steering in preserving off-target distributions.

\section{Discussion and Related Works}
\paragraph{Representation Geometry}
A long line of work has studied the geometric structure of representations learned by neural networks, especially in the context of word embeddings \citep{bengio2013representation, mikolov2013distributed, pennington2014glove, arora2016latent, arora2018linear}. This work has been extended to modern LLMs exploring linear representations \citep{elhage2021mathematical, marks2023geometry, hendel2023context, tigges2023linear, nanda2023emergent, park2024linear, arditi2024refusal, jain2024makes, jiang2024origins}, as well as polytopes, manifolds, and other geometric structures \citep{black2022interpreting, elhage2022toy, gurneelanguage, engels2024not, robinson2024structure, park2025geometry, wollschlager2025geometry, modell2025origins, robinson2025token, dooms2025finding, gurnee2026models}. Related analysis has been done on multi-modal models; of note is work on interpreting the representations learned by CLIP models \citep{radford2021learning, merullo2022linearly, gandelsman2023interpreting, gandelsman2024interpreting}. The present work is the first to explore the interplay between information geometry and the linear representation hypothesis.

\paragraph{Riemannian \& Information Geometry}
Most closely related to our work is the exploration of Riemannian geometry in generative models \citep{arvanitidis2017latent, shao2018riemannian, arvanitidis2020geometrically, yu2025connecting}, especially through the lens of information geometry \citep{amari2016information, nielsen2020elementary, arvanitidis2022pulling}. In particular, \citet{arvanitidis2022pulling} use Riemannian Geometry to understand the geometry in the latent space of VAE. Focusing on the decoder distribution, they pull back the Fisher-Rao metric in the parameter space to the latent space and find the shortest path (geodesic with respect to Levi-Civita connection).
By contrast, we focus on the parameter space of softmax with the dual ($e$- and $m$-) connections, which aligns with the linear representation hypothesis for probing and steering. \citet{volpi2021natural} also study the information geometry of word embeddings learned with a contrastive objective, focusing on similarity between word embeddings, whereas our work focuses on steering and probing softmax models.

\paragraph{Steering}
A line of model steering work in LLMs has focused on using the difference in representations in a model's hidden layers to predictably alter model behavior \citep{li2023inference, liu2023context, turner2023steering, zou2023representation, rimsky2024steering}. Follow-up work has examined the robustness of steering interventions, noting that model steering directions are often ineffective, can cause off-target concept drift, and are sensitive to the choice of steering magnitude \citep{tan2024analysing, pres2024towards, wu2025axbench}. Concurrent work \citep{sarfati2026shapebeliefsgeometrydynamics,wurgaft2026manifoldsteeringrevealsshared} focuses on learning manifolds in language models' representation spaces and steering along these manifolds to avoid off-target drift. Another line of work focuses on editing model representations to erase concepts \citep{bolukbasi2016man,vargas2020exploring,belrose2023leace,ravfogel2022linear}. Most closely related is \citet{singh2024representation}, which discusses steering a concept while minimally perturbing $L^2$ distance; we focus on the KL divergence in the output distribution as our notion of distance.

Other steering work focuses on examining logit differences \Citep{park2024linear,park2025geometry}. They argue that it is possible to intervene on a specific concept by adding a direction induced by a ``causal inner product.'' For instance, adding the estimated direction for the binary concept $\ConceptDirName{male}{female}$ might change the logit difference between `king' and `queen' while leaving the logit difference between `king' and `King' unaltered.
However, logit differences are not probabilities. Even if specific logit differences remain constant, the resulting probabilities can vary significantly due to the contributions of other logits through the softmax operation.
Furthermore, the scale of logit differences is not directly proportional to the magnitude of probability changes. Therefore, to evaluate the effect of steering more rigorously, we focus on changes in the model's probability distribution.

\paragraph{Future Directions}
Empirically, we have focused on the parameter space of the softmax layer, which directly governs the output distribution. This is experimentally convenient for measuring the geometry, constructing probes, and evaluating the steering. However, in practice, steering is often applied to intermediate layers within the model.
Understanding how the geometry of the softmax layer, and of attention layers, influences the geometry of these intermediate layers is an important direction for future research.

\section*{Acknowledgements}
This work is supported by ONR grant N00014-23-1-2591 and Open Philanthropy.

\section*{Impact Statement}

This paper presents work whose goal is to advance the field of Machine
Learning. There are many potential societal consequences of our work, none of which we feel must be specifically highlighted here.

\bibliography{info_geo}
\bibliographystyle{icml2026}

\newpage
\appendix
\onecolumn 

\section{Proofs}\label{sec:proofs}
\subsection{Proof of \Cref{prop:interpolation}}\label{sec:proof_interpolation}
\Interpolation*
\begin{proof}
    Following Proposition 1 in \citet{banerjee2005clustering}, we provide a direct proof for completeness. While the original result assumes strict convexity of the log-normalizer $A$ to ensure a unique solution, we show that the arithmetic mean is a minimizer even when $A$ is not strictly convex.

    (Primal Interpolation) We express the weighted sum of reverse KL divergences as
    \begin{align}
        f(\lambda)
        &:= (1-t) \kl{P_{\lambda}}{P_{\lambda_0}} + t \kl{P_{\lambda}}{P_{\lambda_1}}\\
        &= (1-t) (A(\lambda_0)  - A(\lambda) - \nabla A(\lambda)^{\top} (\lambda_0 - \lambda) ) + t (A(\lambda_1) - A(\lambda) - \nabla A(\lambda)^\top (\lambda_1 - \lambda)) \\
        &= (1-t) A(\lambda_0) + t A(\lambda_1) - A(\lambda) - \nabla A(\lambda)^\top \left((1-t)\lambda_0 + t \lambda_1 - \lambda\right)\\
        &= \text{const} - A(\lambda) - \nabla A(\lambda)^\top (\lambda_t - \lambda),
    \end{align}
    where $\lambda_t = (1-t)\lambda_0 + t \lambda_1$ denotes the primal interpolation.
    For any $\lambda'\in \Lambda$,
    \begin{align}
        f(\lambda') - f(\lambda_t)
        &= - A(\lambda') - \nabla A(\lambda')^\top (\lambda_t - \lambda') + A(\lambda_t) + \nabla A(\lambda_t)^\top (\lambda_t - \lambda_t)\\
        &= A(\lambda_t) - A(\lambda') - \nabla A(\lambda')^\top (\lambda_t - \lambda') \\
        &= \kl{P_{\lambda'}}{P_{\lambda_t}} \geq 0.
    \end{align}
    Therefore, the primal interpolation $\lambda_t$ minimizes the weighted sum of reverse KL divergences.

    (Dual Interpolation) For any $\phi \in \Phi$, there exists $\lambda\in \Lambda$ such that $\phi = \nabla A(\lambda)$. We express the weighted sum of forward KL divergences as
    \begin{align}
        f(\lambda)
        &:=(1-t) \kl{P_{\lambda_0}}{P_{\phi}} + t \kl{P_{\lambda_1}}{P_{\phi}}\\
        &=(1-t) \kl{P_{\lambda_0}}{P_{\lambda}} + t \kl{P_{\lambda_1}}{P_{\lambda}}\\
        &= (1-t) (A(\lambda) - A(\lambda_0) - \nabla A(\lambda_0)^\top (\lambda - \lambda_0)) + t (A(\lambda) - A(\lambda_1) - \nabla A(\lambda_1)^\top (\lambda - \lambda_1)) \\
        &= A(\lambda) - \left((1-t) A(\lambda_0) + t A(\lambda_1)\right) - \phi_t^\top\lambda + (1-t) \nabla A(\lambda_0)^\top \lambda_0 + t \nabla A (\lambda_1)^\top \lambda_1\\
        &= A(\lambda) - \phi_t^\top\lambda+ \text{const},
    \end{align}
    where $\phi_t = (1-t) \nabla A(\lambda_0) + t \nabla A(\lambda_1)$ denotes the dual interpolation.
    Differentiating with respect to $\lambda$,
    \begin{align}
        \nabla_\lambda f(\lambda) = \nabla A(\lambda) - \phi_t = \phi - \phi_t.
    \end{align}
    The second-order derivative is given by the Hessian of the log-normalizer:
    \begin{align}
        \nabla^2_\lambda f(\lambda) = \nabla^2 A(\lambda) \succeq 0,
    \end{align}
    which is positive semi-definite.
    Therefore, the weighted dual interpolation $\phi_t$ minimizes the weighted sum of forward KL divergences.
\end{proof}

\subsection{Proof of \Cref{thm:steering}}\label{sec:proof_steering}
\Steering*
\begin{proof}
    (Proof for \Cref{eq:dual_steering}) This result is grounded on the Projection Theorem in information geometry \citep{amari2016information}.
    However, for completeness, we provide a direct proof here.
    We represent the hyperplane $\Lambda_W(c)$ with a basis $\{v_1, \dots, v_{d-1}\}$ for the null space of $\beta_W^\top$.
    Any $\lambda \in \Lambda_W(c)$ can be written as:
    \begin{align}
        \lambda(\alpha) = c_0 + \sum_{i=1}^{d-1} \alpha_i v_i, \quad \alpha = (\alpha_1,\dots,\alpha_{d-1}) \in \mathbb{R}^{d-1},
    \end{align}
    where $\beta_W^\top c_0 = c$.
    We express the KL divergence between $\lambda_0$ and $\lambda(\alpha) \in \Lambda_W(c)$ as a function of $\alpha$:
    \begin{align}
        f(\alpha)&= \kl{P_{\lambda_0}}{P_{\lambda(\alpha)}} \\
        &= A(\lambda(\alpha)) - A(\lambda_0) - \nabla A(\lambda_0)^\top (\lambda(\alpha) - \lambda_0)\\
        &= A\left(c_0 + \sum_{i=1}^{d-1} \alpha_i v_i\right) - \phi(\lambda_0)^\top \left(c_0 + \sum_{i=1}^{d-1} \alpha_i v_i\right) + \text{const}.
    \end{align}
    The first-order optimality condition for a minimizer $\hat\lambda = \lambda(\hat\alpha) \in \Lambda_W(c)$ is given by:
    \begin{align}
        \frac{\partial}{\partial \alpha_i} f(\alpha) \Big|_{\alpha = \hat\alpha} 
        = \left(\nabla A(\lambda(\hat\alpha)) - \phi(\lambda_0)\right)^\top v_i = 0, \quad i=1,\dots,d-1.
    \end{align}
    Since $\nabla A(\lambda(\hat\alpha)) = \phi(\hat\lambda)$, $\phi(\hat\lambda) - \phi(\lambda_0)$ is orthogonal to all basis vectors $\{v_1,\dots,v_{d-1}\}$ of the hyperplane.
    Thus, $\phi(\hat\lambda) - \phi(\lambda_0)$ is parallel to $\beta_W$.

    (Proof for \Cref{eq:off_target_preservation}) For a given context embedding $\lambda_0$ and any $\lambda\in \Lambda_W(c)$, the KL divergence is decomposed as follows:
    \begin{align}
        \kl{P_{\lambda_0}}{P_{\lambda}}&=\sum_{y \in \mathcal{Y}} P_{\lambda_0}(y) \log \frac{P_{\lambda_0}(y)}{P_{\lambda}(y)}\\
        &=\sum_{i=1}^{n_W} \sum_{w} P_{\lambda_0}(y_i^w) \log \frac{P_{\lambda_0}(y_i^w)}{P_{\lambda}(y_i^w)}+ \sum_{y \in (\mathcal{Y}_W)^c} P_{\lambda_0}(y) \log \frac{P_{\lambda_0}(y)}{P_{\lambda}(y)}.
    \end{align}
    Since $P_{\lambda_0}$ and $P_{\lambda}$ are concept-factorizable with respect to $W$, the second term becomes
    \begin{align}
        \sum_{y \in (\mathcal{Y}_W)^c} P_{\lambda_0}(y) \log \frac{P_{\lambda_0}(y)}{P_{\lambda}(y)} = \sum_{y \in (\mathcal{Y}_W)^c} P^Z_{\lambda_0}(z_y) \log \frac{P^Z_{\lambda_0}(z_y)}{P^Z_{\lambda}(z_y)},
    \end{align}
    and the first term becomes
    \begin{align}
        \sum_{i=1}^{n_W} \sum_{w} P_{\lambda_0}(y_i^w) \log \frac{P_{\lambda_0}(y_i^w)}{P_{\lambda}(y_i^w)}
        &=\sum_{i=1}^{n_W} \sum_{w}  P_{\lambda_0}^W(w) P_{\lambda_0}^Z (z_i)\log \frac{P_{\lambda_0}^W(w) P_{\lambda_0}^Z (z_i)}{P_{\lambda}^W(w) P_{\lambda}^Z (z_i)}\\
        &=\sum_{i=1}^{n_W} \sum_{w}  P_{\lambda_0}^W(w) P_{\lambda_0}^Z (z_i)\log \frac{P_{\lambda_0}^W(w) }{P_{\lambda}^W(w) } + \sum_{i=1}^{n_W} \sum_{w}  P_{\lambda_0}^W(w) P_{\lambda_0}^Z (z_i)\log \frac{ P_{\lambda_0}^Z (z_i)}{ P_{\lambda}^Z (z_i)}\\
        &= \sum_{i=1}^{n_W} P_{\lambda_0}^Z(z_i) \cdot \sum_{w} P^W_{\lambda_0}(w) \log \frac{P^W_{\lambda_0}(w)}{P^W_{\lambda}(w)} + \sum_{i=1}^{n_W} P^Z_{\lambda_0}(z_i) \log \frac{P^Z_{\lambda_0}(z_i)}{P^Z_{\lambda}(z_i)},
    \end{align}
    because $\sum_{w} P_{\lambda_0}^W(w) = 1$. Together, we have
    \begin{align}
        \kl{P_{\lambda_0}}{P_{\lambda}}
        &=\sum_{i=1}^{n_W} P_{\lambda_0}^Z(z_i)\cdot  \kl{P^W_{\lambda_0}}{P^W_{\lambda}} +  \sum_{z \in \mathcal{Z}_W} P^Z_{\lambda_0}(z) \log \frac{P^Z_{\lambda_0}(z)}{P^Z_{\lambda}(z)} \\
        & = \sum_{i=1}^{n_W} P_{\lambda_0}^Z(z_i) \cdot \kl{P^W_{\lambda_0}}{P^W_{\lambda}} + \kl{P^Z_{\lambda_0}}{P^Z_{\lambda}}.\label{eq:kl_decomposition}
    \end{align}
    On the hyperplane $\Lambda_W(c)$, the value of the linear probe is fixed to $c+ b_W$, meaning that the concept distribution $P^W_{\lambda}$ is constant for all $\lambda \in \Lambda_W(c)$. Consequently, the first term $\sum_{i=1}^{n_W} P_{\lambda_0}^Z(z_i) \cdot \kl{P^W_{\lambda_0}}{P^W_{\lambda}}$ is independent of $\lambda$ within this hyperplane.
    The optimization problem thus simplifies as follows:
    \begin{align}
        \hat\lambda &\in \argmin_{\lambda \in \Lambda_W(c)} \kl{P_{\lambda_0}}{P_{\lambda}}\\
        &= \argmin_{\lambda \in \Lambda_W(c)} \left( \text{const} + \kl{P^Z_{\lambda_0}}{P^Z_{\lambda}} \right) \\ 
        &= \argmin_{\lambda \in \Lambda_W(c)} \kl{P^Z_{\lambda_0}}{P^Z_{\lambda}}.
    \end{align}
\end{proof}

\section{Experimental Details}\label{sec:experiment_details}

\subsection{Dataset}

\subsubsection{Large Language Models (LLMs)}
To analyze LLMs, we first construct counterfactual pairs of tokens. Using contexts that typically precede verb tokens (e.g., ``I don't want to''), we sweep all tokens generated via Top-$p$ sampling. We then utilize the Claude API to identify the base form of each verb and generate token pairs that differ across specific binary concepts, such as $\ConceptDirName{verb}{third-person}$, $\ConceptDirName{verb}{ing}$, $\ConceptDirName{verb}{past}$, and $\ConceptDirName{English}{French}$. This mapping consists of over 300 token pairs; while some rare tokens might be omitted from this dictionary, their absence does not significantly undermine the results.

Next, we sample 10,000 text sequences from the C4 dataset and collect the context embeddings for the first 256 tokens of each sequence.
These embeddings are extracted from the final transformer layer, while unembedding vectors are taken directly from the weight matrix of the softmax layer.
For each binary concept, we identify two groups of context embeddings where the Top-3 predicted tokens belong to either the base or target group of the counterfactual pairs, provided the cumulative probability of these tokens is at least 0.7.
Finally, this collection is partitioned into training and test sets. Notably, the training data is derived from the natural distribution of the C4 dataset rather than from specifically constructed counterfactual contexts.

\subsubsection{CLIP Models}
For the CLIP models, we construct the entire image vocabulary using two datasets: the COCO dataset and a synthetic object dataset featuring compositions of colors and shapes.
We generate the synthetic dataset using GPT-Image-1. Random samples are manually verified, with no errors found.
Since the features in these datasets are distinct and isolated, counterfactual pairs are well-defined. For example, for the concept $\ConceptDirName{circles}{triangles}$, pairs include (``blue circles'', ``blue triangles'') and (``red circles and yellow squares'', ``red triangles and yellow squares'').

For each binary concept, we generate two groups of captions (contexts) incorporating various co-occurring features and a set of prefixes, such as ``a photo of''. We use the CLIP text and image encoders to obtain the context embeddings and unembedding vectors, respectively. Because CLIP embeddings are normalized and scaled by a temperature parameter during training, we re-apply this temperature factor (from the final training step) to ensure the embeddings are correctly scaled for the softmax distribution. Following the LLM approach, we split the context embedding dataset into training and test sets.

\subsection{Steering}\label{appendix:steering_details}
For both the LLM and CLIP model, we compute the linear probes (primal and dual mean differences) using the training dataset.
We then perform Euclidean and dual steering along each linear probe, applied to the test set context embeddings.
For dual steering, we employ the regularized Newton method described in \Cref{alg:dual} with a tuned regularization parameter $\alpha$ (specifically, $\alpha = 5 \times 10^{-3}$ for both models). We iterate for a sufficient number of steps to ensure the target concept probability $P^W_{\lambda_t}(1)$ converges to approximately $1$. Specifically, the process is terminated once the target concept probability $P^W_{\lambda_t}(1)$ first reaches $0.9999$. Beyond this threshold, both steering processes typically cause the distribution to collapse onto a single token with a probability of one. As our objective is to analyze the interplay between the concept and off-target distributions throughout the steering trajectory, we exclude these edge cases.

Computationally, dual steering in \Cref{alg:dual} is demanding as it requires updating the covariance matrix $\cov[\gamma \mid \lambda_t]$ and its inverse at each step. The computational complexity of this step is $\mathcal{O}(kd^2 + d^3)$, where $d$ is the dimension of the representation and $k$ is the full vocabulary size. The large vocabulary size $k$ and representation dimension $d$ in the models (e.g., $k = 262$K and $d = 2$K for LLMs) make this step computationally prohibitive.

To address the large vocabulary size $k$, we approximate the covariance matrix using only the top-$K$ (e.g., $K = 20{,}000$) tokens at each step. This approximation remains highly accurate because the probability distribution is typically sparse; since the vast majority of the probability mass is concentrated on a few thousand tokens, the contribution of the long tail to the covariance structure is negligible. For example, as shown in \Cref{tab:approx}, using top-$K$ tokens with $K \geq 20{,}000$ results in a very small approximation error.

With this top-$K$ approximation, we use the conjugate gradient (CG) method to solve the linear system $\Sigma^{\text{reg}}_t v = \beta_W$ more efficiently at each step, as detailed in \Cref{alg:cg_dual}.
Rather than explicitly computing and inverting the full covariance matrix---which requires expensive matrix-matrix multiplications---the primary computational bottleneck becomes the matrix-vector product at each CG iteration.
This reduces the total computational complexity to $\mathcal{O}(nKd)$, where $n$ is the number of CG iteration steps and $K \ll k$. In practice, we find that 20 iterations are sufficient to achieve a strong approximation of the solution.

\begin{table}[t]
  \caption{Frobenius norm of the difference between the top-$K$ approximation
    and the full Fisher information matrix along the steering trajectory.
    The approximation error approaches zero at $K = 20{,}000$.}
  \label{tab:approx}
  \begin{center}
    \begin{small}
      \begin{sc}
        \begin{tabular}{lccccc}
          \toprule
          Top-$K$ & Step 0 & Step 5 & Step 10 & Step 50 & Step 100 \\
          \midrule
          1      & 0.2031 & 0.2031 & 0.2031 & 0.2926 & 0.3321 \\
          20     & 0.0120 & 0.0133 & 0.0151 & 0.0006 & 0.0000 \\
          2000   & 0.0001 & 0.0002 & 0.0002 & 0.0000 & 0.0000 \\
          20000  & 0.0000 & 0.0000 & 0.0000 & 0.0000 & 0.0000 \\
          200000 & 0.0000 & 0.0000 & 0.0000 & 0.0000 & 0.0000 \\
          \bottomrule
        \end{tabular}
      \end{sc}
    \end{small}
  \end{center}
  \vskip -0.1in
\end{table}

\begin{algorithm}[t]
\caption{Dual Steering via Conjugate Gradient}\label{alg:cg_dual}
\begin{algorithmic}

\STATE {\bfseries Input:} initial primal $\lambda_0 \in \mathbb{R}^d$, concept direction $\beta_W \in \mathbb{R}^d$, unembedding matrix $G \in \mathbb{R}^{k\times d}$
\STATE {\bfseries Parameters:} regularization $\alpha > 0$, outer steps $T$, step size $\eta$,
top-$K$ tokens, CG iterations $M$, tolerance $\varepsilon$

\FOR{$t = 0$ {\bfseries to} $T-1$}
    \STATE Compute logits and select top-$K$ indices:
    $\mathcal{K} \leftarrow \operatorname{top-K}(G\lambda_t)$
    \STATE Compute distribution and mean embedding: $\tilde{p} \leftarrow \operatorname{softmax}(G_\mathcal{K}\,\lambda_t)$,
    $\mu_t \leftarrow \tilde{p}^\top G_\mathcal{K}$

    \STATE Initialize: $v \leftarrow 0$,\; $r \leftarrow \beta_W$,\; $\rho \leftarrow r^\top r$,\;
    $p \leftarrow r$

    \FOR{$i = 0$ {\bfseries to} $M-1$}
        \STATE Compute $\Sigma^{\text{reg}}_t p$: $q \leftarrow G_\mathcal{K}^\top\bigl(\tilde{p} \odot (G_\mathcal{K}\,p)\bigr)
        - \mu_t\,(\mu_t^\top p) + \alpha\, p$
        \IF{$|p^\top q| < 10^{-10}$}
            \STATE \textbf{break}
        \ENDIF
        \STATE $\alpha_i \leftarrow  \dfrac{\rho}{p^\top q}$, $v \leftarrow v + \alpha_i\, p$, $r \leftarrow r - \alpha_i\, q$, $\rho' \leftarrow r^\top r$
        \IF{$\sqrt{\rho'} < \varepsilon$}
            \STATE \textbf{break}
        \ENDIF
        \STATE $p \leftarrow r + \dfrac{\rho'}{\rho} p$
        \STATE $\rho \leftarrow \rho'$
    \ENDFOR

    \STATE  Normalize and step: $\lambda_{t+1} \leftarrow \lambda_t + \eta\, \dfrac{v}{\|v\|_2}$

\ENDFOR

\STATE {\bfseries Output:} path $\{\lambda_t\}_{t=0}^{T}$

\end{algorithmic}
\end{algorithm}

\subsection{Metrics}
When computing metrics along each path $\{\lambda_t\}_t$, the term $P(y_i \mid \lambda_t)$ represents the direct output probability from the softmax layer at step $t$. However, in the case of CLIP, a single token $y_i$ may correspond to multiple images. For instance, the token ``blue circles'' can serve as a caption for various distinct images. Consequently, $P(y_i\mid \lambda)$ is computed by aggregating the probabilities of all images in the vocabulary that are associated with the token $y_i$.
For our main experiments, we use two images per token to maintain balance, while for \Cref{fig:figure_1,fig:interpolation}, we use a single image per token to ensure visual clarity.

For the off-target distribution $P^Z_{\lambda}$ in \Cref{def:concept_factorizability}, we define $P^Z_{\lambda}(z_i) = P_{\lambda}(y^0_i) + P_{\lambda}(y^1_i)$. In cases where multiple base tokens $y_i^0$ correspond to a single target token $y_i^1$ (e.g., ``purchase'' and ``buy'' both mapping to the French ``acheter'' for the $\ConceptDirName{English}{French}$ concept), we compute $P^Z_{\lambda}(z_i)$ by aggregating the probabilities of all associated tokens into a single concept-level token $z_i$.

To ensure numerical stability when computing the KL divergence, we add an offset to the probabilities. This prevents the KL divergence from becoming unstable when probabilities approach zero. Since our analysis focuses on the behavior of the primary mass of the probability distribution, this offset does not meaningfully alter the results.

Calculating the difference in reverse ranks for every step is computationally prohibitive due to the large vocabulary size, particularly for LLMs. To address this, we select a subset of equally spaced steps along the path and identify the tokens constituting the Top 0.99 cumulative probability at each. We then take the union of these tokens to form a reduced vocabulary. The rank differences are computed by sorting only the tokens within this union, as omitted tokens have negligible probabilities that do not significantly impact the ranking.

Finally, to visualize the mean and standard error of the mean (SEM) across different test contexts, we utilize binning. We aggregate the metrics into discrete bins based on the target concept probability $P_{\lambda}^W(1)$. We first average the metrics within each bin for each individual path, and then compute the overall mean and SEM across all test contexts for each bin.

\subsection{Text Templates for CLIP Experiments}
In this section, we present the text templates used in the CLIP experiments.
\subsubsection{Object Experiments}
For the object experiments, we use the following text templates, yielding examples like ``blue circles'' and ``a rendering of blue circles''.
\begin{lstlisting}[language=Python, basicstyle=\ttfamily\small, showstringspaces=false]
    prefix_formats = [
        "",
        "a rendering of ",
        "a depiction of ",
        "an illustration of ",
        "a conceptual illustration of ",
        "A rendering of ",
        "A depiction of ",
        "An illustration of ",
        "A conceptual illustration of ",
        "Rendering of ",
        "Depiction of ",
        "Illustration of ",
        "Conceptual illustration of ",
    ]
\end{lstlisting}

\subsubsection{COCO Experiments}
For the experiments with COCO, we use the following text templates, yielding examples like ``a photo of a dog'' and ``a picture of a dog''.
\begin{lstlisting}[language=Python, basicstyle=\ttfamily\small, showstringspaces=false]
    prefix_formats = [
        "",
        "a photo of ",
        "a picture of ",
        "an image of ",
        "a photograph of ",
        "a snapshot of ",
        "A photo of ",
        "A picture of ",
        "An image of ",
        "A photograph of ",
        "A snapshot of ",
        "Photo of ",
        "Picture of ",
        "Image of ",
        "Photograph of ",
        "Snapshot of ",
    ]
\end{lstlisting}

\section{Additional Results}
\subsection{Geometry of Dual Steering via Regularized Newton Updates}\label{sec:cosine_similarity}
In \Cref{sec:implementation}, we discussed how regularization facilitates the implementation of dual steering.
As previously noted, when the dual coordinate $\phi(\lambda)$ approaches the boundary of the feasible region $\Phi$, the regularized step $v$ shifts the distribution toward regions of higher entropy.
This increases the local variance of the concept $W$, effectively bringing $\beta_W$ back within the range of the Hessian.

\Cref{fig:all_experiments_cos} illustrates this behavior by plotting the cosine similarity between the actual dual step and the desired concept direction: $\cos (\phi(\lambda_{t+1}) - \phi(\lambda_t) , \beta_W)$. In dual steering, the step is not perfectly aligned with the concept direction initially due to the influence of regularization. However, as steering progresses, the cosine similarity increases, indicating that dual steering effectively aligns with the concept direction in the dual space. In contrast, Euclidean steering maintains a lower cosine similarity throughout the process, as the dual step is a concept direction transformed by the Hessian.

Interestingly, for dual steering, the cosine similarity begins to decrease after reaching a peak. This occurs as the steering path again approaches the boundary of the convex hull $\Phi$. Geometrically, by leveraging the momentum provided by the concept direction, the steering path is able to ``slide'' along the low-dimensional faces of the convex hull boundary.
Conversely, Euclidean steering typically moves through the deep interior of the convex hull, where irrelevant tokens are assigned non-negligible probabilities.

\begin{figure}[H]
  \centering
  \includegraphics[width=1.0\linewidth]{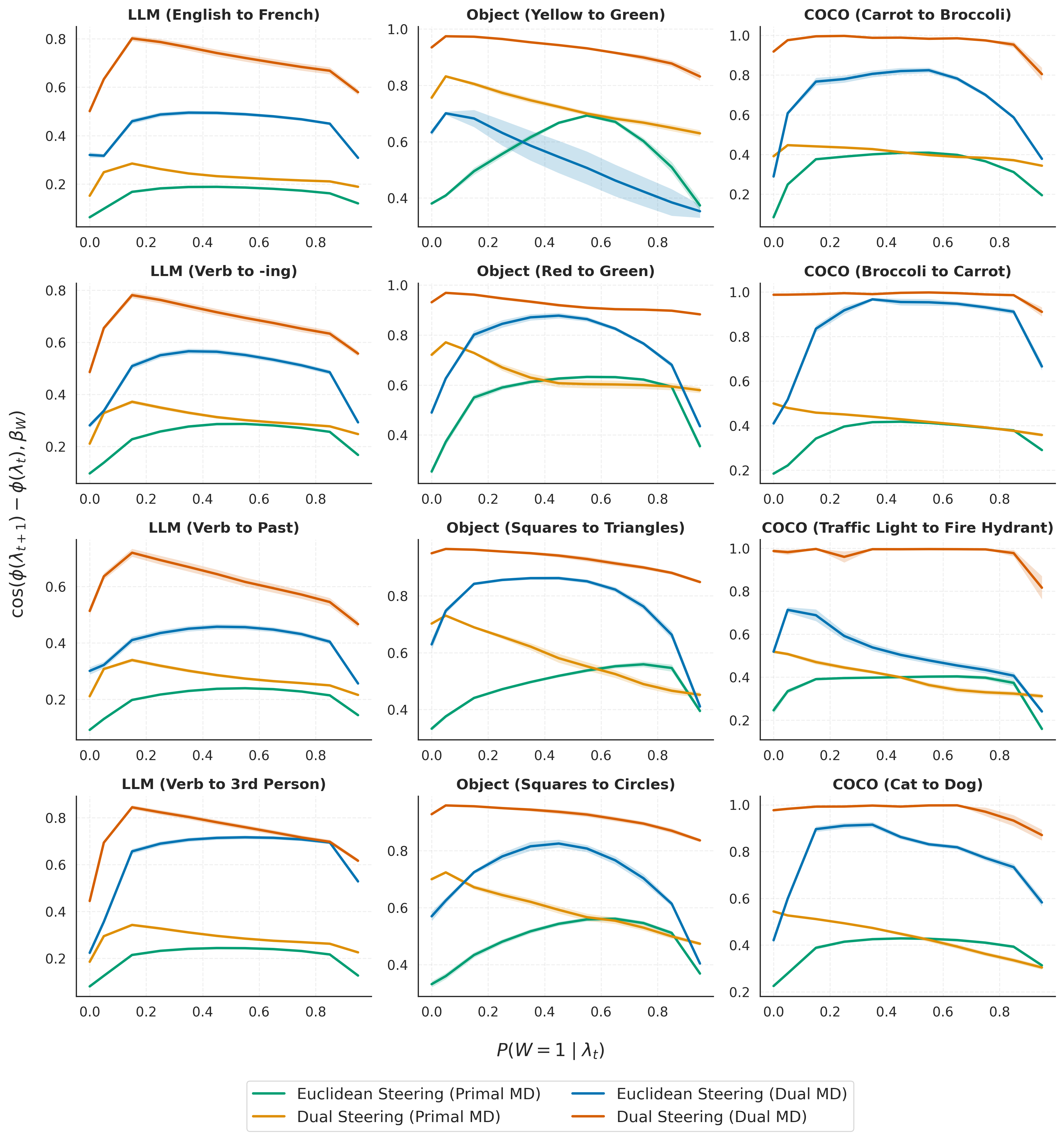}
  \caption{Cosine similarity between the concept direction and the change in dual coordinates during Euclidean (green and blue) and dual steering (orange and red).
  Dual steering maintains a higher cosine similarity than Euclidean steering, indicating that dual steering effectively moves along the concept direction in the dual space.
  In particular, the increase in cosine similarity during the first few steps suggests that the regularized Newton method initially adjusts the trajectory to increase the local variance of the concept, facilitating effective dual steering along the concept direction.
  Lines represent the mean, and shading indicates the standard error of the mean (SEM) across test contexts.}
  \label{fig:all_experiments_cos}
\end{figure}

\subsection{Primal and Dual MDs as Linear Probes}\label{sec:probing}
\Cref{fig:probing_performance} illustrates the effectiveness of primal and dual mean differences as linear probes for the target concepts across all experiments.
Both methods successfully separate the base and target context embeddings in the test sets, demonstrating their utility in probing for the respective concepts.

However, \Cref{fig:probing_hyperplane} evaluates the validity of the probing assumption in \Cref{thm:steering}. For each Euclidean or dual steering path $\{\lambda_t\}$, the figure displays both $\mathrm{logit}\,P(W=1 \mid \lambda_t)$ and the projection onto the concept direction $\beta_W^\top \lambda_t / \|\beta_W\|_2$.
When compared to the logits and projections of the context embeddings in the test set, we observe that dual steering paths typically yield lower logit values than those of the test contexts at the same projection value.
This suggests that the target concept probability $P^W_{\lambda}(1)$ is not constant along the hyperplane defined by the linear probe.
Consequently, the probing assumption is not strictly satisfied in practice; as discussed in \Cref{sec:experiment_discussion}, this discrepancy may impact the overall effectiveness of dual steering.

\begin{figure}[H]
  \centering
  \includegraphics[width=0.7\linewidth]{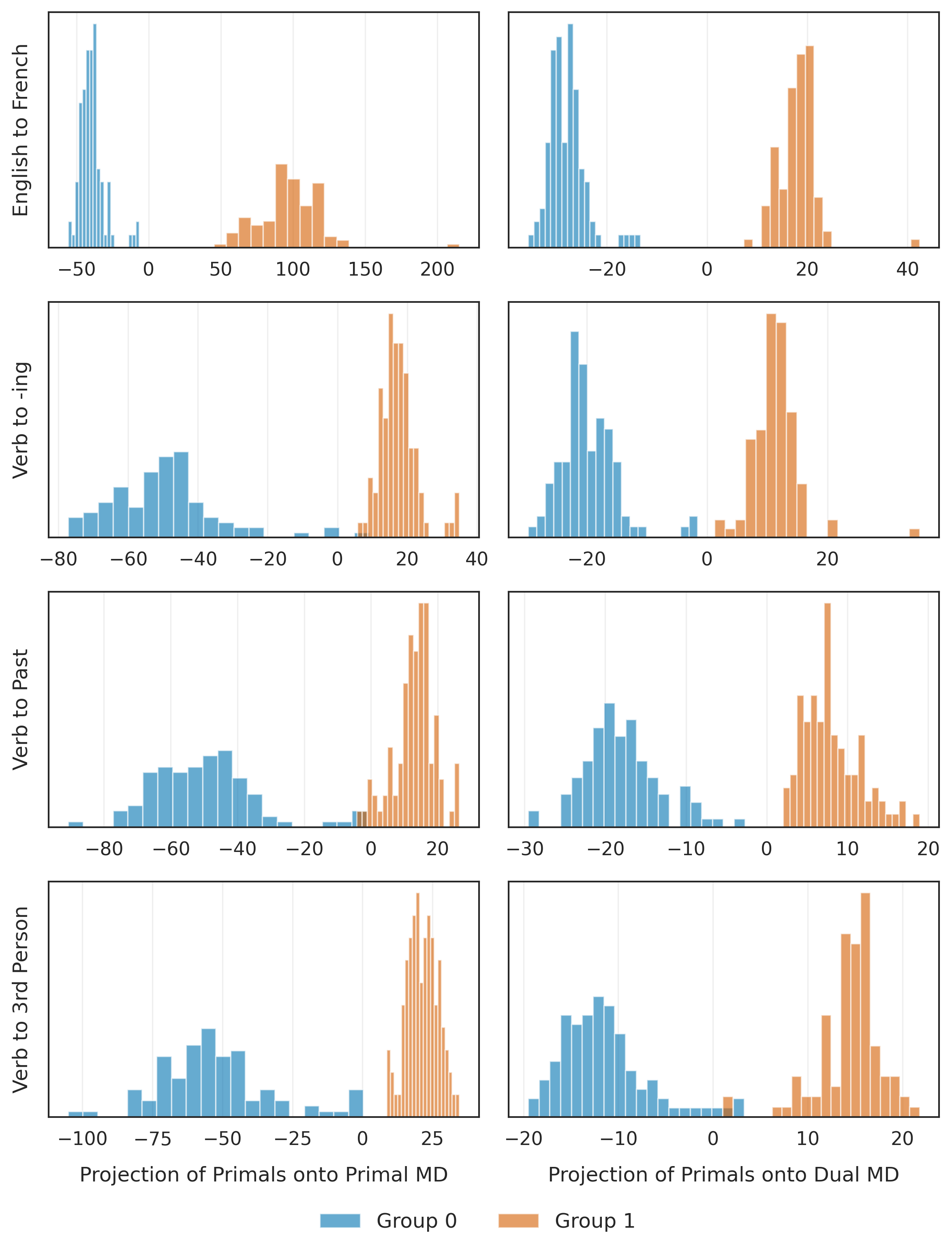}
  \caption{Histogram of projections of test set context embeddings onto primal and dual mean differences $\beta_W^{\top}\lambda_i / \|\beta_W\|_2$. Both primal and dual mean differences effectively separate the base and target context embeddings, indicating their utility as linear probes for the respective concepts.}
  \label{fig:probing_performance}
\end{figure}

\begin{figure}[H]
  \centering
  \includegraphics[width=0.85\linewidth]{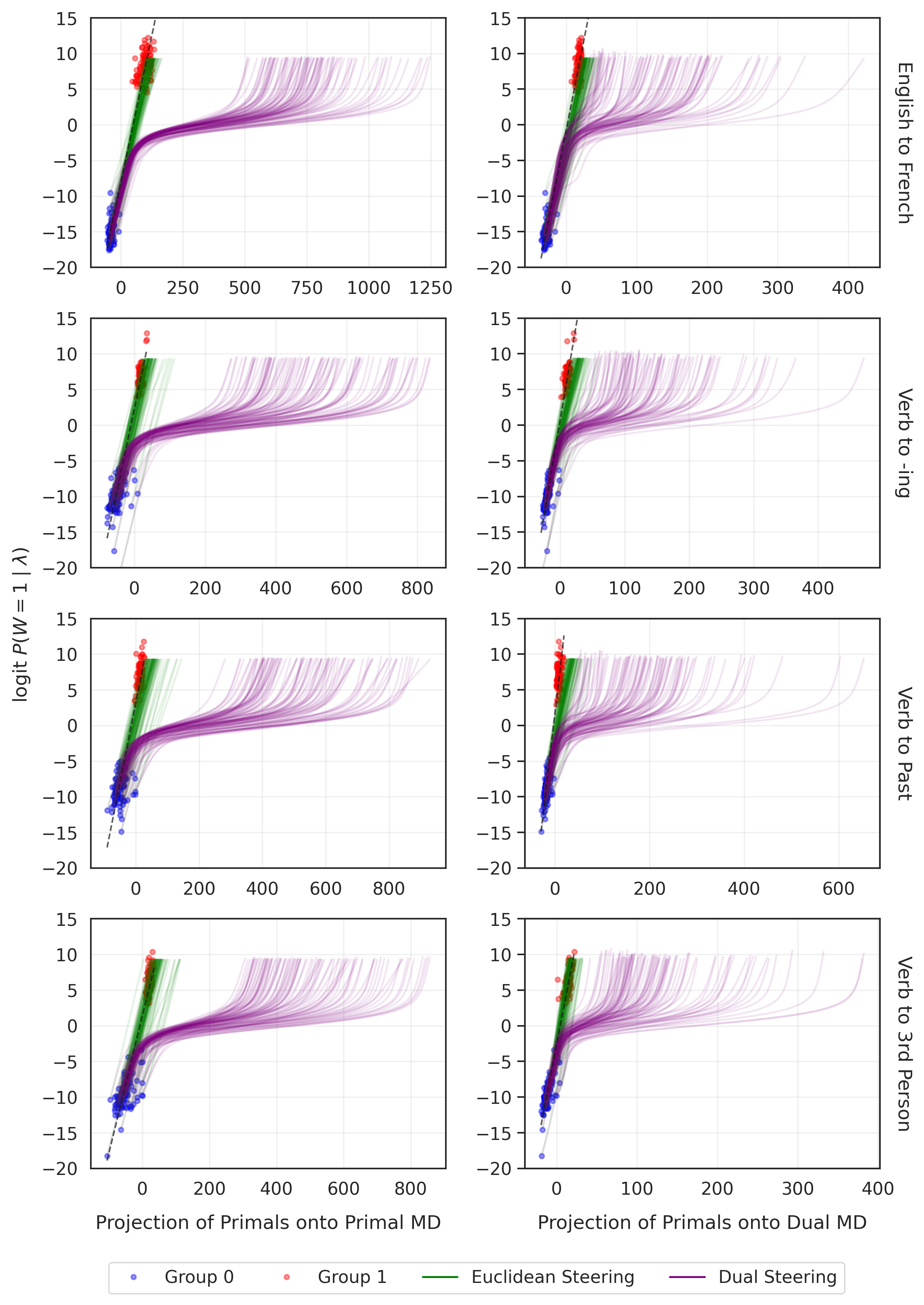}
  \caption{Projection of Euclidean (green) and dual (purple) steering paths onto the linear probe ($\beta_W^{\top}\lambda_t / \|\beta_W\|_2$) against the logit of the target concept probability ($\mathrm{logit} \,P(W=1 \mid \lambda_t)$), using primal or dual mean differences as the linear probe. Blue and red dots represent the projections and logits of context embeddings from the base and target test groups, respectively.
  Dual steering typically yields a lower concept probability for any given projection value compared to the test set contexts.
  This suggests that the probing assumption in \Cref{thm:steering} is not strictly satisfied in practice.}
  \label{fig:probing_hyperplane}
\end{figure}

\subsection{Steering Results for More Concepts and Directions}\label{sec:steering_all}

In this section, we present additional steering results for additional concepts with both the LLM and CLIP experiments as well as additional steering directions.

\begin{figure}[H]
  \centering
  \includegraphics[width=0.95\linewidth]{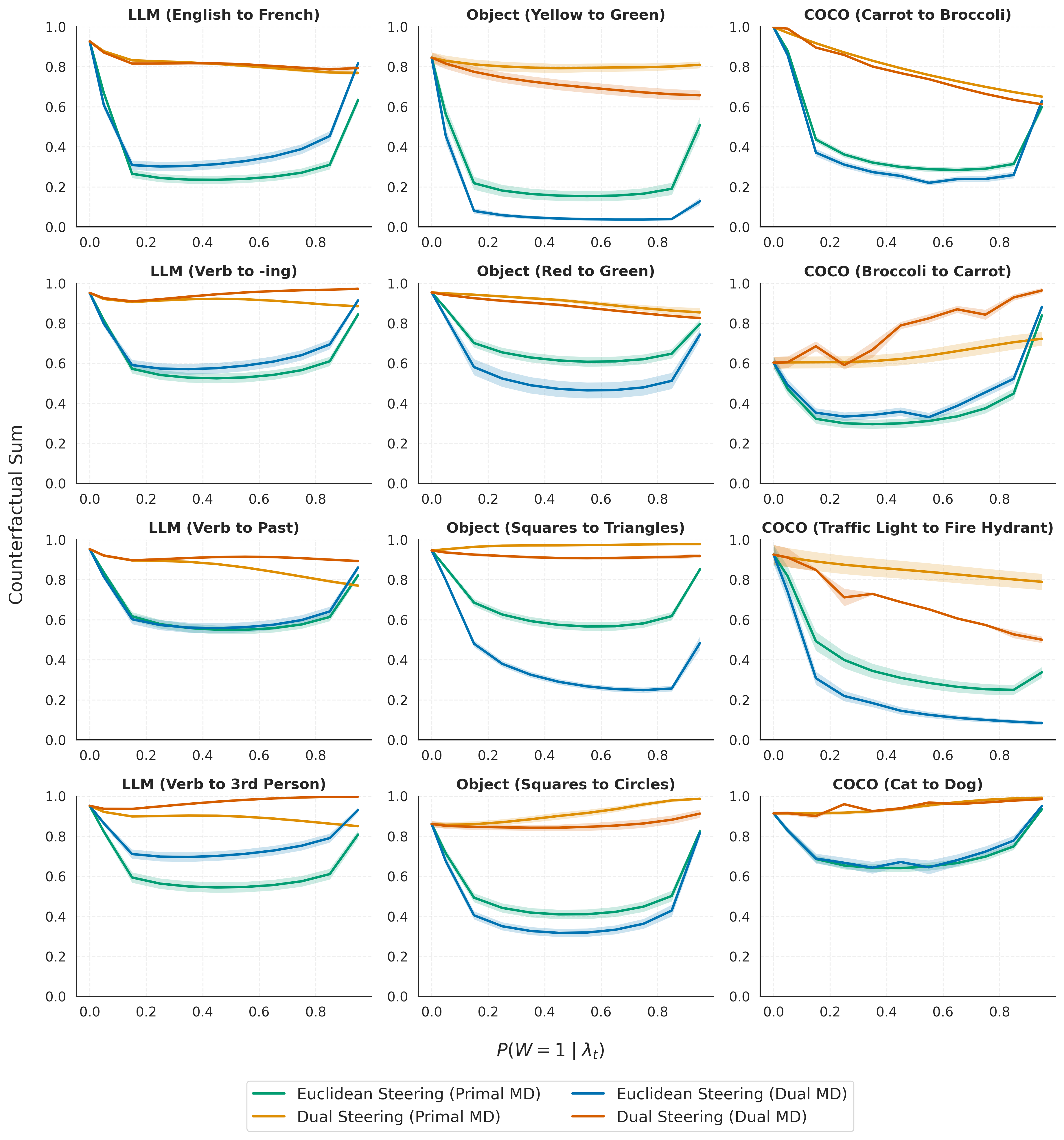}
  \caption{Total probability mass on counterfactual pairs during Euclidean and dual steering across all experiments. Dual steering consistently preserves a higher probability mass on counterfactual pairs during intermediate steps; this suggests greater robustness, as it avoids ``leakage'' to neutral tokens more effectively than Euclidean steering.
  Lines represent the mean, and shading indicates the standard error of the mean (SEM) across test contexts.}
  \label{fig:all_experiments_sum}
\end{figure}

\begin{figure}[H]
  \centering
  \includegraphics[width=1.0\linewidth]{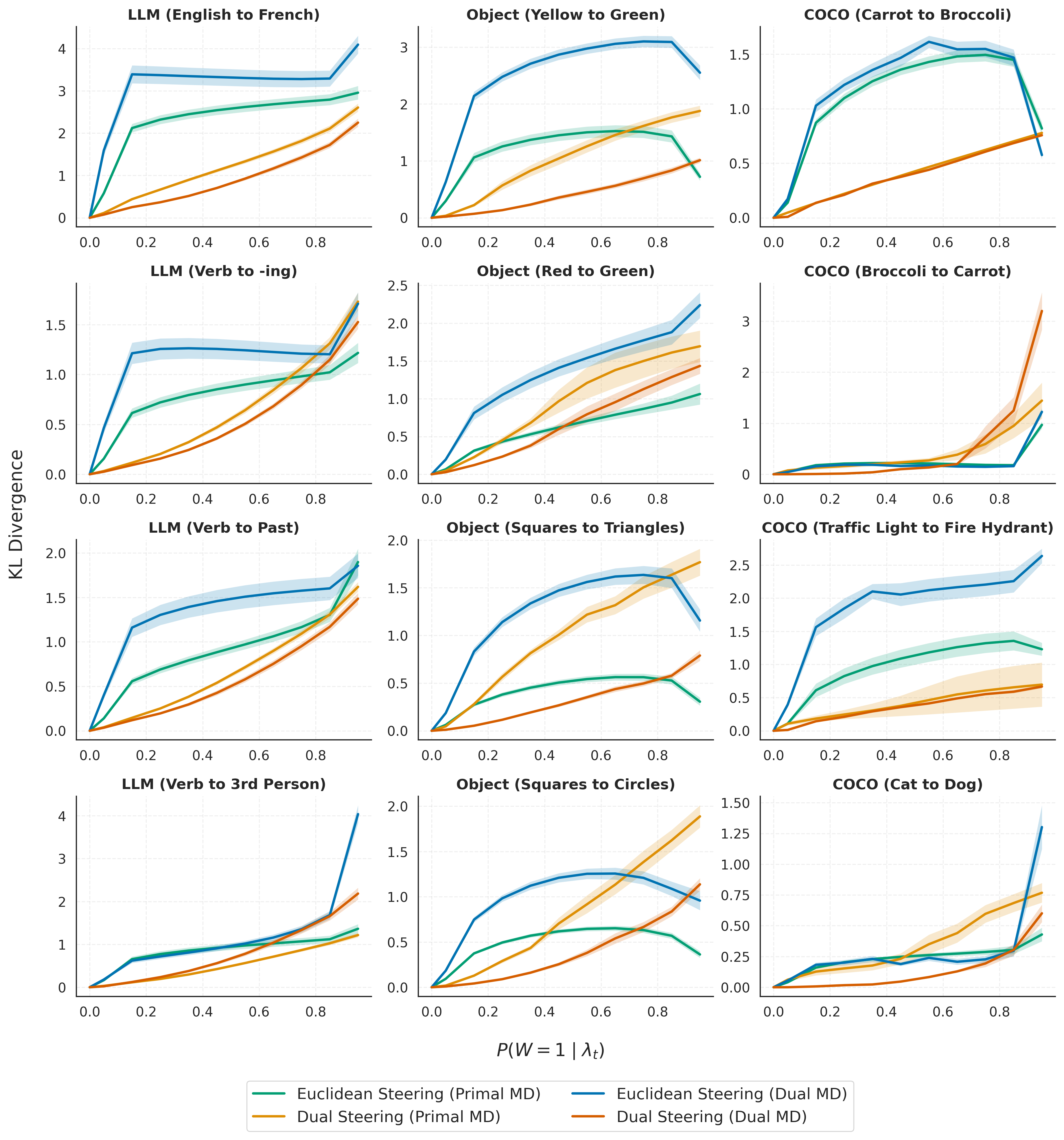}
  \caption{KL divergence of off-target distributions during Euclidean and dual steering across all experiments. Dual steering results in lower KL divergence values, indicating better preservation of off-target concepts compared to Euclidean steering.
  Lines represent the mean, and shading indicates the standard error of the mean (SEM) across test contexts.}
  \label{fig:all_experiments_fkl}
\end{figure}

\begin{figure}[H]
  \centering
  \includegraphics[width=1.0\linewidth]{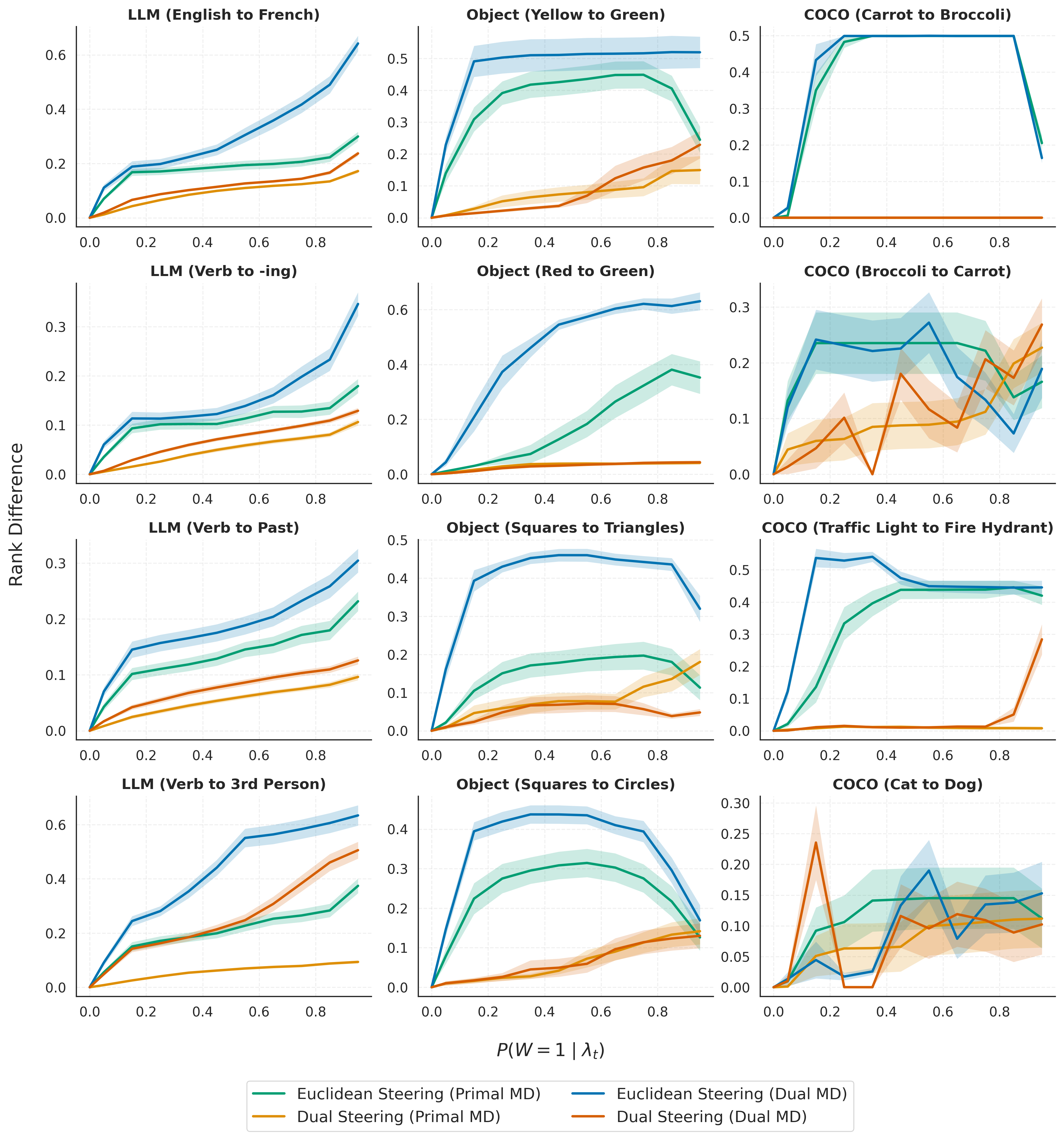}
  \caption{Rank differences of off-target distributions during Euclidean and dual steering across all experiments. Dual steering results in lower rank differences, indicating better preservation of off-target concepts compared to Euclidean steering.
  Lines represent the mean, and shading indicates the standard error of the mean (SEM) across test contexts.}
  \label{fig:all_experiments_rank_diff}
\end{figure}

\end{document}